\documentclass[11pt]{article}
\usepackage[margin=1in]{geometry}
\usepackage{amsmath,amssymb,amsthm,mathtools}
\usepackage[round]{natbib}
\usepackage{graphicx}
\usepackage{booktabs}
\usepackage{xcolor} 

\usepackage{comment}
\usepackage[ruled]{algorithm2e}
\usepackage{subcaption}

\DeclareCaptionLabelFormat{tableseries}{Table~#2}
\usepackage[normalem]{ulem}

\newcommand{\Dir}{\mathrm{Dir}}
\newcommand{\E}{\mathbb{E}}
\newcommand{\Var}{\mathrm{Var}}

\newcommand{\PE}{\mathrm{PE}}

\theoremstyle{plain} 
\newtheorem{theorem}{Theorem}
\newtheorem{lemma}[theorem]{Lemma}           
\newtheorem{proposition}[theorem]{Proposition}
\newtheorem{corollary}[theorem]{Corollary}

\theoremstyle{definition} 
\newtheorem{definition}[theorem]{Definition}

\theoremstyle{remark} 

\usepackage{microtype}
\usepackage[hidelinks]{hyperref}

\title{Improving the Predictive Performance of Bootstrap Aggregating by Dirichlet Resampling}
\author{
  Quoc Viet Le\thanks{Department of Statistics, University of Wisconsin--Madison. Email: \texttt{qle7@wisc.edu}}
  \and
  Joonha Park\thanks{Department of Mathematics, University of Kansas. Email: \texttt{j.park@ku.edu}}
}
\date{}

\begin{document}
\maketitle

\begin{abstract}
  We revisit Breiman's observation that reducing inter-tree correlation without weakening individual trees can improve random forests. Building on this principle, we introduce two variants:
Dirichlet-Multinomial Bagging Random Forest (DM) and Dirichlet-Weighted Random Forest (DW).
Both modulate sample reweighting via a concentration parameter $\alpha>0$. We provide a simple theoretical criterion that clarifies when these variants behave indistinguishably from standard random forests, and we use it to guide a lightweight tuning strategy. In a controlled evaluation on public classification benchmarks, DM and DW are consistently competitive and often stronger than other random-forest (RF) baselines, with negligible additional runtime.
\end{abstract}

\section{Introduction}

Random Forests (RF) attain strong predictive performance via randomized trees and bootstrap aggregating (bagging). Theorem~2.3 of \cite{Breiman2001RF} gives an upper bound on the asymptotic generalized error $\PE^*$ for Random Forests in terms of two quantities: the strength of individual tree $s$ and mean inter-tree correlation $\rho$, which is given as
\[ \PE^\ast \;\le\; \rho\,\frac{1-s^2}{s^2}.\]

This viewpoint motivates mechanisms that reduce $\rho$ to lower the generalized error upper bound. We study two closely related Dirichlet resampling mechanisms that target the data-side randomness of RFs: (i) a \emph{Dirichlet-Multinomial Bagging Random Forest} (DM) that draws each tree’s bag via a Dirichlet–multinomial; and (ii) a \emph{Dirichlet-Weighted Random Forest} (DW) that keeps all $n$ training points but assigns Dirichlet weights.
A single hyperparameter $\alpha$ controls the overlap between pairs of bootstrap samples (for DM) or the concentration of  weights (for DW).

\paragraph{Contributions.} To the best of our knowledge, no prior framework controls bootstrap aggregating via Dirichlet-based resampling with a single parameter. We introduce Dirichlet-Multinomial Bagging RF (DM) and Dirichlet-Weighted RF (DW)---two single-parameter ($\alpha>0$) random-forest variants that modulate sample resampling/weighting while holding model capacity fixed. We design a finite-sample \(\alpha\)-cap to control how close our methods are to RF at resolution \(\varepsilon\). We prove a sufficient condition under which DM/DW is (up to a tolerance \(\varepsilon\)) indistinguishable from standard RF, yielding an \(n\)-scaled upper bound that suggests a principled \(\alpha\)-sweep. Complete proofs of our theoretical results are deferred to Appendix~\ref{app:proofs}, and additional experimental results are reported in Appendix~\ref{app:additional-results}.

\section{Related Work}

Bagging was introduced by Breiman \cite{Breiman1996Bagging} as a way to stabilize unstable learners by training on multiple bootstrap replicates of the data; the bootstrap itself goes back to \cite{Efron1979Bootstrap} (see \cite{HastieTibshiraniFriedman2009ESL} for a textbook treatment of bagging and ensemble methods). In his technical note on ``random features'' and the formal Random Forests (RF) paper, Breiman developed the strength--correlation view and the generalization-error bound that motivates decreasing inter-tree correlation $\rho$ while maintaining strength $s$ \cite{Breiman1999RFNote,Breiman2001RF}. Empirical work has explicitly linked these two quantities to RF error \cite{Bernard2010StrengthCorrelation}.
\cite{Buhlmann2002AnalyzingBagging} analyzed the variance-reduction effect of bagging and proposed subagging (sampling without replacement) as a principled alternative to the Efron bootstrap. For inference, \cite{MentchHooker2016JMLR} showed that RFs built with subsampling can be cast as $U$-statistics, yielding asymptotic normality for predictions; subsequent developments leveraged ``honest'' forests for causal inference \cite{WagerAthey2018CausalForest}. On consistency, Scornet, Biau, and Vert \cite{Scornet2015Consistency} established convergence results for original RFs within an additive-function setting.
To handle streaming settings, \cite{Oza2005Online} replaced multinomial bootstrap counts with independent $N_i\sim \mathrm{Poisson}(1)$ per observation, an online approximation widely adopted in practice. Reducing inter-tree correlation can also be achieved by randomizing the feature space as Random Subspace by \cite{Ho1998Subspace}, by randomizing split thresholds more aggressively as Extremely Randomized Trees by \cite{Geurts2006ExtraTrees}, or by jointly randomizing both samples and features as Random Patches by \cite{Louppe2012RandomPatches}. \cite{Rubin1981BB}’s Bayesian bootstrap replaces resampling by Dirichlet weights on the empirical support ; \cite{NewtonRaftery1994WLB} introduced the weighted likelihood bootstrap which generalizes this to approximate Bayes draws.
For trees and ensembles, Bayesian-bootstrap weighting has been explored by \cite{ClydeLee2001BBB,Lee2004LosslessBayesBag}, and fully/proper Bayesian variants frame forests as posterior draws by \cite{Taddy2015BayesForests,Galvani2021PBBTrees}.

\section{Method}
\subsection{Dirichlet-Multinomial Bagging Random Forest (DM)}

Given the training data $\{(x_i,y_i)\}_{i=1}^n$, Dirichlet–Multinomial (DM) trees are generated as follows.
For each tree index $b\in\{1,\dots,B\}$:
\begin{enumerate}
\item Draw a probability vector over training indices
\[
\mathbf p^{(b)} \sim \mathrm{Dir}\!\big(\alpha\,\mathbf{1}_n\big),
\]
where $\alpha>0$ is the concentration parameter and $\mathbf{1}_n\in\mathbb{R}^n$ is the all-ones vector.
\item Form a size-$n$ bag by sampling with replacement the training data with counts
\[
K^{(b)} \sim \text{Multinomial }\!\big(n,\,\mathbf p^{(b)}\big),
\]
and train a CART learner on the sample.
\end{enumerate}

\subsection{Dirichlet-Weighted Random Forest (DW)}
For each tree index $b=1,\dots,B$, draw a symmetric Dirichlet weight vector
$\mathbf w^{(b)} \sim \Dir(\alpha\,\mathbf 1_n)$ over the $n$ training points and fit a CART using the weights $\mathbf w^{(b)}$.

This method retains the entire training set as the empirical support for every tree (all $n$ points receive positive weight almost surely), so there is no natural notion of OOB samples for this variant.

\subsection{Relation to Existing Methods}

The Bayesian bootstrap (BB) draws $\mathbf w\sim\Dir(1,\ldots,1)$ over the empirical support,
while the Weighted Likelihood Bootstrap (WLB) draws $\tilde w_i \stackrel{iid}{\sim}\mathrm{Exp}(1)$
and uses normalized weights $\tilde{\mathbf w}/\sum_i \tilde w_i$.
Since $\mathrm{Exp}(1)=\mathrm{Gamma}(1,1)$ and a normalized independent Gamma vector is Dirichlet,
both BB and WLB induce exactly the same weighting distribution as our Dirichlet-weighted scheme
with $\alpha=1$. Dirichlet–weighted Random Forest therefore strictly generalizes these schemes: it recovers the Bayesian/weighted-likelihood bootstrap Random Forest at $\alpha=1$, while $\alpha$ continuously controls weight concentration (Refer to Appendix~\ref{app:bbwlb}).

By Proposition~\ref{prop:whp-markov}, in large samples, the Bayesian Random Forests or weighted-likelihood bootstrap Random Forests are, with high probability, almost indistinguishable from the standard Random Forest without bootstrapping.
Accordingly, using $\alpha>1$ offers little practical value.

\section{Theoretical Analysis}
\begin{definition}[$\varepsilon$–uniform weights in $\ell_p$]
\label{def:eps-uniform}
Let $\mathbf u=\tfrac{1}{n}\mathbf{1}_n$ be the uniform weights of sample of size $n$ and
$v$ be a vector of probabilities or normalized weights, we say $\mathbf v$ is $\varepsilon$–uniform weights in $\ell_p$ if $\|\mathbf v - \mathbf u\|_p \le \varepsilon$.
\end{definition}

As $\alpha \to \infty$, $\mathbf{p}^{(b)} \to \mathbf{u}$ in probability, and the DM mechanism recovers the standard bootstrap used in Efron's Random Forests.
For DW, the fact that $\mathbf{w}^{(b)} \to \mathbf{u}$ in probability as $\alpha \to \infty$ implies that it approaches a Random Forest without bootstrapping, since all points receive equal weight in every tree.

\begin{lemma}[Second moment]
Let $\mathbf p\sim\mathrm{Dir}(\alpha\,\mathbf 1_n)$ and $\mathbf u=\frac{1}{n}\mathbf 1_n$. Then
\[
\mathbb{E}\|\mathbf p - \mathbf u\|_2^2
= \frac{n-1}{\,n\,(n\alpha+1)\,}
\]
\end{lemma}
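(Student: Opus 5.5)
Since $\mathbb{E}[p_i]=1/n$ for every coordinate, the plan is to write the squared norm as a sum of coordinatewise variances and then evaluate the variance of a single Dirichlet marginal. Expanding,
\[
\mathbb{E}\|\mathbf p-\mathbf u\|_2^2=\sum_{i=1}^n \mathbb{E}\bigl(p_i-\tfrac1n\bigr)^2=\sum_{i=1}^n \Var(p_i)=n\,\Var(p_1),
\]
where the mean identity and the last equality both follow from the exchangeability of the symmetric Dirichlet $\mathrm{Dir}(\alpha\mathbf 1_n)$.

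To obtain $\Var(p_1)$, I use the aggregation property: $p_1\sim\mathrm{Beta}(\alpha,(n-1)\alpha)$. One way to see this is to represent $p_i=G_i/\sum_j G_j$ with $G_j\stackrel{iid}{\sim}\mathrm{Gamma}(\alpha,1)$, the same Gamma representation the paper uses in relating DW to the WLB. The Beta variance formula with $a=\alpha$, $b=(n-1)\alpha$ and $a+b=n\alpha$ then gives
\[
\Var(p_1)=\frac{ab}{(a+b)^2(a+b+1)}=\frac{\alpha\cdot(n-1)\alpha}{n^2\alpha^2(n\alpha+1)}=\frac{n-1}{n^2(n\alpha+1)}.
\]

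Multiplying by $n$ yields $\frac{n-1}{n(n\alpha+1)}$, which is the claim. As a sanity check, one can instead compute $\mathbb{E}\|\mathbf p\|_2^2-\frac1n$ using $\mathbb{E}p_1^2=\frac{\alpha(\alpha+1)}{n\alpha(n\alpha+1)}$. The only step that is not pure algebra is the Beta marginal and its variance, and that is a standard fact I will cite rather than rederive. I therefore expect no real obstacle beyond keeping the algebra correct.
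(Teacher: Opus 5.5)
Your proposal is correct and follows essentially the same route as the paper's proof: you use $\mathbb{E}p_i=1/n$ to reduce $\mathbb{E}\|\mathbf p-\mathbf u\|_2^2$ to $n\,\Var(p_1)$, then apply the Beta$(\alpha,(n-1)\alpha)$ marginal and its variance formula. Your optional sanity check via $\mathbb{E}\|\mathbf p\|_2^2-\tfrac1n$ also works out to $\frac{n-1}{n(n\alpha+1)}$.
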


\begin{proposition}[High-probability bound]
\label{prop:whp-markov}
Fix $\varepsilon>0$ and confidence level $1-\delta\in(0,1)$. If
\[
\alpha \;\ge\; \frac{\,n-1\,}{\,n^2\,\varepsilon^2\,\delta\,}\;-\;\frac{1}{n}\,,
\]
then a DM draw $\mathbf p\sim\mathrm{Dir}(\alpha\,\mathbf 1_n)$ is $\epsilon$-uniform in the $\ell_2$ sense with probability at least $1-\delta$.
\end{proposition}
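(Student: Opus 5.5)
The plan is to apply Markov's inequality to the nonnegative random variable $\|\mathbf p-\mathbf u\|_2^2$, using the Second moment Lemma for its expectation. Since $\|\mathbf p-\mathbf u\|_2\le\varepsilon$ holds exactly when $\|\mathbf p-\mathbf u\|_2^2\le\varepsilon^2$, it suffices to bound the failure probability:
\[
\Pr\bigl(\|\mathbf p-\mathbf u\|_2>\varepsilon\bigr)
\le \Pr\bigl(\|\mathbf p-\mathbf u\|_2^2\ge\varepsilon^2\bigr)
\le \frac{\mathbb{E}\|\mathbf p-\mathbf u\|_2^2}{\varepsilon^2}
= \frac{n-1}{\varepsilon^2\,n\,(n\alpha+1)}.
\]
Chebyshev gives nothing sharper without higher moments, and the stated threshold is exactly what Markov on the second moment produces. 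This already indicates the intended route.

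Next, I will show that the hypothesis on $\alpha$ makes the right-hand side at most $\delta$. Requiring $\frac{n-1}{\varepsilon^2 n(n\alpha+1)}\le\delta$ is equivalent, since all quantities are positive, to $n\alpha+1\ge \frac{n-1}{n\varepsilon^2\delta}$. Dividing by $n$ turns this into $\alpha\ge \frac{n-1}{n^2\varepsilon^2\delta}-\frac1n$, which is precisely the assumed condition. Therefore $\Pr(\|\mathbf p-\mathbf u\|_2\le\varepsilon)\ge 1-\delta$, meaning the draw is $\varepsilon$-uniform in $\ell_2$ in the sense of Definition~\ref{def:eps-uniform}.

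There is no genuine obstacle, because all the work sits in the Second moment Lemma, which I take as given. The only care needed is to keep every rearrangement in the direction that preserves the inequality, which holds since $n\alpha+1>0$, $\varepsilon^2\delta>0$ and $n\ge1$. I will also observe that if the right-hand side of the condition is nonpositive (for example $n=1$), the hypothesis is vacuous for every $\alpha>0$. The conclusion still holds in that case, since the Markov bound is then automatically at most $\delta$.
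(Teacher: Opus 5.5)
Your proposal is correct and follows essentially the same route as the paper: Markov's inequality applied to $\|\mathbf p-\mathbf u\|_2^2$ with the second-moment lemma, then solving $\frac{n-1}{n(n\alpha+1)\varepsilon^2}\le\delta$ for $\alpha$. Your rearrangement is carried out correctly. The degenerate case you mention, where the right-hand side of the condition is nonpositive, is already covered by the same equivalence.
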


\textit{Proof.}
By Markov's inequality applied to the nonnegative random variable $\|\mathbf p - \mathbf u\|_2^2$,
\[
\Pr\!\big(\|\mathbf p - \mathbf u\|_2 > \varepsilon\big)
\le \frac{\mathbb{E}\|\mathbf p - \mathbf u\|_2^2}{\varepsilon^2}
= \frac{\,n-1\,}{\,n\,(n\alpha+1)\,\varepsilon^2}\,.
\]
Requiring the right-hand side to be at most $\delta$ and solving for $\alpha$ yields the stated bound.

\begin{corollary}[Expectation bound]
\label{cor4}
If
\[
\alpha \;\ge\; \frac{\,n-1\,}{\,n^2\,\varepsilon^2\,}\;-\;\frac{1}{n}\,, \text{ then } \mathbb{E}\|\mathbf p - \mathbf u\|_2 \le \varepsilon.
\]
\end{corollary}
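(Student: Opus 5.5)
The plan is to combine Jensen's inequality with the Second moment lemma, so that no tail bound is needed. The function $t\mapsto\sqrt t$ is concave on $[0,\infty)$ and $\|\mathbf p-\mathbf u\|_2\ge 0$, so
\[
\mathbb{E}\|\mathbf p-\mathbf u\|_2 \;\le\; \sqrt{\mathbb{E}\|\mathbf p-\mathbf u\|_2^2}.
\]
Equivalently, by Cauchy--Schwarz, $\bigl(\mathbb{E}\,X\bigr)^2\le \mathbb{E}X^2$ for $X=\|\mathbf p-\mathbf u\|_2$.

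By the Second moment lemma, the right-hand side equals $\sqrt{(n-1)/\bigl(n(n\alpha+1)\bigr)}$. It therefore suffices to show that the hypothesis on $\alpha$ gives
\[
\frac{n-1}{n(n\alpha+1)}\le\varepsilon^2 .
\]

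To check this, rearrange the hypothesis $\alpha\ge \frac{n-1}{n^2\varepsilon^2}-\frac1n$. Multiplying by $n>0$ gives $n\alpha+1\ge \frac{n-1}{n\varepsilon^2}$. Since $n\alpha+1>0$ and $\varepsilon>0$, this is equivalent to $\frac{n-1}{n(n\alpha+1)}\le\varepsilon^2$. Taking square roots and chaining with the Jensen step yields $\mathbb{E}\|\mathbf p-\mathbf u\|_2\le\varepsilon$.

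There is no real obstacle. The only care needed is in the direction of the inequalities when inverting $n\alpha+1$, which is justified because $\alpha>0$. The degenerate case $n=1$ is also harmless: there $\mathbf p=\mathbf u$ almost surely, so the bound holds trivially.
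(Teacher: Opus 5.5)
Your proof is correct: Jensen's inequality reduces the claim to the second-moment lemma, and rewriting the hypothesis as $\frac{n-1}{n(n\alpha+1)}\le\varepsilon^2$ is valid because $n\alpha+1>0$. The paper gives no separate written proof of this corollary, but its threshold is exactly what your Jensen step yields. Setting $\delta=1$ in Proposition~\ref{prop:whp-markov} produces the same number, but it gives only the trivial statement that a probability is at most $1$. The tail-integration argument the paper uses for its $\ell_\infty$ expectation bound would cost an extra factor of $4$ in $\alpha$, so yours is the argument the stated constant actually relies on.
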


\noindent\textit{Remark (OOB Analysis).}
Let $U$ be the number of distinct training data points included in a bag. For $\alpha>0$,
\begin{align*}
p(\alpha):= \E\!\left[\frac{U}{n}\right]= 1 - \frac{B\!\left(\alpha,\,(n-1)\alpha + n\right)}{B\!\left(\alpha,\,(n-1)\alpha\right)},
\end{align*}
where $B(\cdot,\cdot)$ denotes the Beta function. The OOB (out-of-bag) rate equals $1-p(\alpha)$. A detailed derivation is provided in Appendix~\ref{app:p-alpha-derivation}.

Let $\textbf{p}\!\sim\!\Dir(\alpha\,\mathbf 1_n)$ and $\textbf{K$|$p}\!\sim\!\text{Mult}(n,\mathbf p)$ be the
sampling counts under Dirichlet-Multinomial (DM) resampling.
By Corollary~\ref{cor4}, it is useful to define the dataset–dependent threshold
\[
\alpha_{\mathrm{RF}}(n,\varepsilon)\;:=\;\frac{n-1}{n^2\varepsilon^2}-\frac{1}{n}\,.
\]
For $\alpha\ge \alpha_{\mathrm{RF}}(n,\varepsilon)$, the Dirichlet draw $\textbf{p}$ is
$\varepsilon$–uniform in the $\ell_{2}$ sense in expectation. 
Hence, the induced DM counts
$K$ are indistinguishable from Efron’s bootstrap
$\text{Mult}\!\big(n,\mathbf u\big)$ at resolution $\varepsilon$. 
Thus $\alpha_{\mathrm{RF}}(n,\varepsilon)$ marks a saturation point, after which further increases have little effect.

\section{Experiments}
We set $\alpha$ values in DM/DW implementations as follows.
Set 
\(
\alpha_{\max}=\alpha_{\mathrm{RF}}(n,\varepsilon),
\)
with $\varepsilon = 0.01$ for all datasets.
We sweep $\alpha$ over $(0,\alpha_{\max}]$. 
The cap $\alpha_{\max}$ realizes the DM$\to$RF (bootstrap) limit at resolution $\varepsilon$.
For DW, we apply the same $\alpha$-sweep. In this case, using $\alpha > \alpha_{\mathrm{RF}}(n,\varepsilon)$ produces $\varepsilon$-uniform weights, which approximately corresponds to a no-bootstrap regime.
 
Algorithm~\ref{alg:alpha-sampler-simple} summarizes a random search design for the Dirichlet concentration parameter $\alpha$, following the random-search philosophy of \cite{10.5555/2188385.2188395}.
A brief justification is provided in Appendix~\ref{app:alpha_search}.

\begin{algorithm}[t]
\caption{Random-$\alpha$ search}
\label{alg:alpha-sampler-simple}
\DontPrintSemicolon
\SetKwInOut{Input}{Input}\SetKwInOut{Output}{Output}
\Input{training data size $n$; $\alpha$-draws $M$; resolution $\varepsilon$; search bounds $\alpha_{\min}^{\rm ov},\alpha_{\max}^{\rm ov}$ (optional)}
\Output{a grid $\mathcal{A}$ of $\alpha$ values}

Let $\alpha_{\max}\leftarrow \alpha_{\mathrm{RF}}(n,\varepsilon)$ or $\alpha_{\max}^{\rm ov}$ (if provided)\;
Let $\alpha_{\min}\leftarrow 0$ or $\alpha_{\min}^{\rm ov}$ (if provided)\;

Let $z_{\min}\leftarrow \log(1+n\alpha_{\min})$ and  $z_{\max}\leftarrow \log(1+n\alpha_{\max})$\;
Draw $U_1,\dots,U_M \stackrel{iid}{\sim}U(0,1)$\;
\For{$m=1$ \KwTo $M$}{
  $z_m \leftarrow z_{\min}+U_m\,(z_{\max}-z_{\min})$\;
  $\alpha_m \leftarrow \dfrac{e^{z_m}-1}{n}$\;
}
\Return $\mathcal{A} = \mathrm{sort}\big(\{\alpha_m\}_{m=1}^M\big)$\;
\end{algorithm}

We use fifteen benchmark datasets from the UCI and OpenML \cite{vanschoren2014openml} repositories.
Details of the datasets are provided in Appendix~\ref{app:dataset-info}.
For each dataset, class imbalance is categorized as low ( majority class $\lesssim 60 \%$), medium ( $\approx 60-80 \%$), or high ( $\gtrsim 80 \%$).

\subsection{Experimental Settings}
The baselines and Dirichlet-Multinomial (DM) and Dirichlet Weighting (DW) Random Forests share the same base learner and hyperparameters. The base learner is an unpruned CART decision tree \cite{breiman1984cart} (classification with Gini impurity), implemented via the scikit-learn \cite{pedregosa2011scikit} decision-tree classes and mirrored by our DM/DW implementations that follow the scikit-learn API. We set the number of trees to 200 ; number of features considered for splitting nodes to $m_{try}=\sqrt{d}$ for classification tasks (configured through \texttt{max\_features} and the corresponding setting in our classes); maximum depth is unlimited; and the minimum number of samples per leaf is 1.

Each method was run 10 times on each dataset.
For consistency, all methods used the same 10 train/test splits per dataset with an 80/20 ratio.
Fixed random seeds were employed to ensure reproducibility.

\subsubsection{Baselines}
We evaluated four benchmark methods, configured as follows.
\begin{itemize}
\raggedright
\item \textbf{RF-bootstrap} (\texttt{RandomForestClassifier}): \texttt{n\_estimators=200}, \texttt{bootstrap=True}, \texttt{criterion="gini"}, \texttt{max\_features="sqrt"}.
\item \textbf{RF-no-bootstrap} (\texttt{RandomForestClassifier}): same as above except \texttt{bootstrap=False}.
\item \textbf{ET} (\texttt{ExtraTreesClassifier}): \texttt{n\_estimators=200}, \texttt{bootstrap=False}, \texttt{criterion="gini"}, \texttt{max\_features="sqrt"}.
\item \textbf{Subsample bagging} (bagging over \texttt{DecisionTreeClassifier}): base estimator \texttt{DecisionTreeClassifier(max\_features="sqrt" random\_state=seed)}; \texttt{n\_estimators=200}; without-replacement subsampling with \(\texttt{max\_samples}=\mathrm{round}(0.632 \times \text{train\_size})\); \texttt{bootstrap=False}.
\end{itemize}

\subsubsection{Dirichlet-Multinomial (DM) / Dirichlet Weighting (DW)}
We developed two Python functions implementing the DM/DW Random Forests.
\begin{itemize}
\raggedright
\item \textbf{DirichletMultinomialBaggingRFClassifier} (Algorithm~\ref{alg:dmrf})
\item \textbf{DirichletWeightedRFClassifier} (Algorithm~\ref{alg:dwrf})
\end{itemize}
We compared these methods with the baselines under identical settings (\texttt{n\_estimators=200}, \texttt{max\_features="sqrt"}).
The concentration parameter \(\alpha\) was varied via a random search (Algorithm~\ref{alg:alpha-sampler-simple}) and passed to the respective function as \texttt{alpha}.

\begin{algorithm}[t]
\caption{Dirichlet–Multinomial Bagging Random Forest (DM-RF)}
\label{alg:dmrf}
\DontPrintSemicolon
\SetKwInOut{Input}{Input}\SetKwInOut{Output}{Output}

\Input{Training data $\{(x_i,y_i)\}_{i=1}^n$; number of trees $B$; concentration parameter $\alpha>0$; number of features considered per node split $m_{try}$.}
\Output{Ensemble $\mathcal{F}=\{T^{(b)}\}_{b=1}^B$.}

\BlankLine
\For{$b \gets 1$ \KwTo $B$}{
    Draw probabilities $\mathbf p^{(b)} \sim \Dir(\alpha\,\mathbf 1_n)$\;

    Draw counts $K^{(b)} \sim \text{Mult}\!\big(n,\,\mathbf p^{(b)}\big)$

    Form a bootstrap dataset $\mathcal{D}^{(b)}$ by assigning the multiplicity $K^{(b)}_i$ to the $i$-th data point \;

    Train a CART $T^{(b)}$ on $\mathcal{D}^{(b)}$ using $m_{try}$ randomly selected features at each split\;

    Store $T^{(b)}$ in $\mathcal{F}$\;
}
\end{algorithm}

\begin{algorithm}[t]
\caption{Dirichlet–Weighted Random Forest (DW-RF)}
\label{alg:dwrf}
\DontPrintSemicolon
\SetKwInOut{Input}{Input}\SetKwInOut{Output}{Output}
\Input{Training data $\{(x_i,y_i)\}_{i=1}^n$; number of trees $B$; concentration parameter $\alpha>0$; number of features considered per node split $m_{try}$.}
\Output{Ensemble $\mathcal{F}=\{T^{(b)}\}_{b=1}^B$.}
\BlankLine
\For{$b \gets 1$ \KwTo $B$}{
  Draw sample weights $\mathbf w^{(b)} \sim \Dir(\alpha\,\mathbf 1_n)$\;
  Train a CART $T^{(b)}$ on the full dataset using \texttt{sample\_weight} $=\mathbf w^{(b)}$ and $m_{try}$ randomly selected features at each split\;
  Store $T^{(b)}$ in $\mathcal{F}$\;
}
\end{algorithm}

\subsection{Results}
For each replication, the 200 tree predictions were aggregated using soft voting: the predicted label for a test point $x$ was given by
$\hat{y}(x)=\arg\max_k \hat{p}_k(x)$, where $\hat{p}_k(x)=\frac{1}{B}\sum_{b=1}^B \hat{p}_k^{(b)}(x)$.
We evaluated predictive performance using three metrics: classification accuracy, log-loss, and the area under the ROC curve (AUROC). The results below summarize the experiments in which DM/DW outperformed the baseline at the 90\% significance level (\(z > 1.64\)).




\begin{table}[p]
\centering
\caption{Accuracy (Mean $\pm$ SE) of DM/DW vs four baselines (ET, RF-bootstrap, RF-no-bootstrap, Subsample). Superscripts for DM/DW indicate: (1) baselines significantly outperformed (10\% significance level), and (2) baselines for which DM/DW is not significantly worse.
}
\label{tab:acc_superscripts_no_brf_corrected}
\vspace{\baselineskip}

\scriptsize
\setlength{\tabcolsep}{4pt}\renewcommand{\arraystretch}{1.05}
\begin{tabular}{lcccccc}
\textbf{Dataset} & \textbf{ET} & \textbf{RF-bootstrap} & \textbf{RF-no-bootstrap} & \textbf{Subsample} & \textbf{DM} & \textbf{DW}\\
\midrule
adult & 0.8293$\pm$0.0009 & 0.8491$\pm$0.0011 & 0.8431$\pm$0.0010 & 0.8499$\pm$0.0011 & 0.8561$\pm$0.0011$^{4,4}$ & 0.8567$\pm$0.0011$^{4,4}$ \\
banknote & 0.9993$\pm$0.0005 & 0.9938$\pm$0.0012 & 0.9945$\pm$0.0011 & 0.9942$\pm$0.0006 & 0.9945$\pm$0.0006$^{0,3}$ & 0.9971$\pm$0.0009$^{3,3}$ \\
breast\_cancer & 0.9649$\pm$0.0035 & 0.9649$\pm$0.0039 & 0.9667$\pm$0.0025 & 0.9684$\pm$0.0035 & 0.9684$\pm$0.0035$^{0,4}$ & 0.9702$\pm$0.0033$^{0,4}$ \\
credit\_g & 0.7510$\pm$0.0084 & 0.7650$\pm$0.0098 & 0.7540$\pm$0.0078 & 0.7615$\pm$0.0080 & 0.7710$\pm$0.0077$^{1,4}$ & 0.7615$\pm$0.0093$^{0,4}$ \\
digits & 0.9839$\pm$0.0015 & 0.9769$\pm$0.0013 & 0.9781$\pm$0.0017 & 0.9767$\pm$0.0022 & 0.9781$\pm$0.0017$^{0,3}$ & 0.9817$\pm$0.0012$^{3,4}$ \\
ecoli & 0.8809$\pm$0.0071 & 0.8941$\pm$0.0098 & 0.8824$\pm$0.0076 & 0.8985$\pm$0.0080 & 0.9000$\pm$0.0092$^{0,4}$ & 0.8971$\pm$0.0076$^{0,4}$ \\
glass & 0.7907$\pm$0.0125 & 0.7767$\pm$0.0131 & 0.7698$\pm$0.0203 & 0.7837$\pm$0.0155 & 0.7791$\pm$0.0144$^{0,4}$ & 0.7837$\pm$0.0211$^{0,4}$ \\
ionosphere & 0.9507$\pm$0.0057 & 0.9423$\pm$0.0065 & 0.9380$\pm$0.0079 & 0.9451$\pm$0.0074 & 0.9465$\pm$0.0062$^{0,4}$ & 0.9507$\pm$0.0067$^{0,4}$ \\
iris & 0.9600$\pm$0.0097 & 0.9500$\pm$0.0124 & 0.9433$\pm$0.0071 & 0.9500$\pm$0.0124 & 0.9633$\pm$0.0126$^{0,4}$ & 0.9533$\pm$0.0089$^{0,4}$ \\
letter & 0.9723$\pm$0.0008 & 0.9644$\pm$0.0009 & 0.9679$\pm$0.0009 & 0.9644$\pm$0.0009 & 0.9602$\pm$0.0012$^{0,0}$ & 0.9682$\pm$0.0009$^{2,3}$ \\
magic04 & 0.8807$\pm$0.0020 & 0.8835$\pm$0.0016 & 0.8831$\pm$0.0015 & 0.8835$\pm$0.0018 & 0.8838$\pm$0.0015$^{0,4}$ & 0.8849$\pm$0.0015$^{1,4}$ \\
optdigits & 0.9861$\pm$0.0010 & 0.9844$\pm$0.0010 & 0.9861$\pm$0.0010 & 0.9846$\pm$0.0011 & 0.9825$\pm$0.0014$^{0,2}$ & 0.9859$\pm$0.0011$^{0,4}$ \\
page\_blocks & 0.9709$\pm$0.0012 & 0.9728$\pm$0.0012 & 0.9692$\pm$0.0014 & 0.9722$\pm$0.0011 & 0.9699$\pm$0.0015$^{0,4}$ & 0.9733$\pm$0.0013$^{1,4}$ \\
pendigits & 0.9935$\pm$0.0003 & 0.9918$\pm$0.0005 & 0.9928$\pm$0.0005 & 0.9918$\pm$0.0004 & 0.9904$\pm$0.0006$^{0,0}$ & 0.9934$\pm$0.0005$^{2,4}$ \\
phoneme & 0.9182$\pm$0.0022 & 0.9127$\pm$0.0022 & 0.9131$\pm$0.0021 & 0.9119$\pm$0.0026 & 0.9014$\pm$0.0022$^{0,0}$ & 0.9191$\pm$0.0024$^{3,4}$ \\
\end{tabular}
\vspace{0.5em}

\end{table}

\vspace{\baselineskip}

\begin{table}[p]
\centering
\caption{Log loss (Mean$\pm$SE) of DM/DW vs four baselines}
\label{tab:logloss_superscripts}
\vspace{\baselineskip}

\scriptsize
\setlength{\tabcolsep}{4pt}\renewcommand{\arraystretch}{1.05}
\centering

\begin{tabular}{lcccccc}
\textbf{Dataset} & \textbf{ET} & \textbf{RF-bootstrap} & \textbf{RF-no-bootstrap} & \textbf{Subsample} & \textbf{DM} & \textbf{DW} \\
\midrule
adult & 0.5109$\pm$0.0078 & 0.3573$\pm$0.0034 & 0.4893$\pm$0.0069 & 0.3541$\pm$0.0039 & 0.3332$\pm$0.0044$^{4,4}$ & 0.3321$\pm$0.0043$^{4,4}$ \\
banknote & 0.0106$\pm$0.0028 & 0.0243$\pm$0.0064 & 0.0231$\pm$0.0066 & 0.0205$\pm$0.0048 & 0.0199$\pm$0.0047$^{0,3}$ & 0.0149$\pm$0.0049$^{0,4}$ \\
breast\_cancer & 0.1195$\pm$0.0182 & 0.1309$\pm$0.0197 & 0.1168$\pm$0.0155 & 0.1186$\pm$0.0170 & 0.1203$\pm$0.0165$^{0,4}$ & 0.1183$\pm$0.0166$^{0,4}$ \\
credit\_g & 0.5874$\pm$0.0410 & 0.5664$\pm$0.0395 & 0.5853$\pm$0.0380 & 0.5738$\pm$0.0391 & 0.5601$\pm$0.0387$^{0,4}$ & 0.5724$\pm$0.0402$^{0,4}$ \\
digits & 0.0792$\pm$0.0045 & 0.0978$\pm$0.0054 & 0.0906$\pm$0.0046 & 0.0987$\pm$0.0066 & 0.0904$\pm$0.0051$^{0,3}$ & 0.0857$\pm$0.0048$^{1,4}$ \\
ecoli & 0.3245$\pm$0.0269 & 0.3222$\pm$0.0319 & 0.3307$\pm$0.0262 & 0.3161$\pm$0.0288 & 0.3100$\pm$0.0301$^{0,4}$ & 0.3135$\pm$0.0261$^{0,4}$ \\
glass & 0.8735$\pm$0.0833 & 0.9864$\pm$0.1162 & 1.0522$\pm$0.1531 & 0.9371$\pm$0.0982 & 0.9521$\pm$0.1080$^{0,4}$ & 0.9400$\pm$0.1064$^{0,4}$ \\
ionosphere & 0.2007$\pm$0.0247 & 0.2532$\pm$0.0297 & 0.2644$\pm$0.0380 & 0.2309$\pm$0.0334 & 0.2211$\pm$0.0296$^{0,4}$ & 0.2078$\pm$0.0299$^{0,4}$ \\
iris & 0.1492$\pm$0.0302 & 0.1771$\pm$0.0399 & 0.1943$\pm$0.0269 & 0.1776$\pm$0.0405 & 0.1433$\pm$0.0307$^{0,4}$ & 0.1675$\pm$0.0269$^{0,4}$ \\
letter & 0.1068$\pm$0.0010 & 0.1429$\pm$0.0021 & 0.1295$\pm$0.0018 & 0.1427$\pm$0.0020 & 0.1526$\pm$0.0029$^{0,0}$ & 0.1281$\pm$0.0019$^{2,3}$ \\
magic04 & 0.2952$\pm$0.0033 & 0.2827$\pm$0.0033 & 0.2835$\pm$0.0032 & 0.2827$\pm$0.0036 & 0.2816$\pm$0.0034$^{1,4}$ & 0.2797$\pm$0.0034$^{1,4}$ \\
optdigits & 0.0646$\pm$0.0025 & 0.0739$\pm$0.0030 & 0.0651$\pm$0.0026 & 0.0735$\pm$0.0034 & 0.0790$\pm$0.0037$^{0,2}$ & 0.0681$\pm$0.0028$^{0,4}$ \\
page\_blocks & 0.1127$\pm$0.0023 & 0.1096$\pm$0.0022 & 0.1165$\pm$0.0026 & 0.1107$\pm$0.0022 & 0.1110$\pm$0.0026$^{0,4}$ & 0.1087$\pm$0.0023$^{1,4}$ \\
pendigits & 0.0221$\pm$0.0007 & 0.0307$\pm$0.0010 & 0.0250$\pm$0.0009 & 0.0309$\pm$0.0010 & 0.0341$\pm$0.0012$^{0,0}$ & 0.0240$\pm$0.0009$^{2,3}$ \\
phoneme & 0.2752$\pm$0.0046 & 0.2913$\pm$0.0054 & 0.2887$\pm$0.0052 & 0.2954$\pm$0.0061 & 0.3056$\pm$0.0060$^{0,1}$ & 0.2716$\pm$0.0059$^{3,4}$ \\

\end{tabular}
\vspace{0.25em}
\end{table}

\vspace{\baselineskip}

\begin{table}[p]
\centering

\caption{AUROC (Mean$\pm$SE) of DM/DW vs four baselines.}
\label{tab:auroc}
\vspace{\baselineskip}

\scriptsize
\setlength{\tabcolsep}{4pt}\renewcommand{\arraystretch}{1.05}
  \begin{tabular}{lcccccc}
\textbf{Dataset} & \textbf{ET} & \textbf{RF-bootstrap} & \textbf{RF-no-bootstrap} & \textbf{Subsample} & \textbf{DM} & \textbf{DW} \\
\midrule
adult & 0.8779$\pm$0.0010 \ & 0.9018$\pm$0.0010 \ & 0.8870$\pm$0.0011 \ & 0.9023$\pm$0.0011 \ & 0.9053$\pm$0.0011$^{4,4}$  \ & 0.9055$\pm$0.0011$^{4,4}$  \ \\
banknote 
& 0.9992$\pm$0.0004 \ & 0.9965$\pm$0.0013 \ & 0.9970$\pm$0.0012 \ & 0.9972$\pm$0.0006 \ & 0.9972$\pm$0.0006$^{0,3}$  \ & 0.9985$\pm$0.0008$^{0,4}$  \ \\
breast\_cancer 
& 0.9936$\pm$0.0027 \ & 0.9928$\pm$0.0031 \ & 0.9938$\pm$0.0022 \ & 0.9943$\pm$0.0028 \ & 0.9942$\pm$0.0028$^{0,4}$  \ & 0.9944$\pm$0.0026$^{0,4}$  \ \\
credit\_g 
& 0.8446$\pm$0.0145 \ & 0.8537$\pm$0.0142 \ & 0.8450$\pm$0.0129 \ & 0.8498$\pm$0.0137 \ & 0.8577$\pm$0.0138$^{0,4}$  \ & 0.8495$\pm$0.0145$^{0,4}$  \ \\
digits 
& 0.9984$\pm$0.0003 \ & 0.9974$\pm$0.0005 \ & 0.9977$\pm$0.0004 \ & 0.9973$\pm$0.0007 \ & 0.9977$\pm$0.0005$^{0,4}$  \ & 0.9980$\pm$0.0004$^{0,4}$  \ \\
glass 
& 0.8070$\pm$0.0409 \ & 0.7848$\pm$0.0476 \ & 0.7706$\pm$0.0598 \ & 0.7955$\pm$0.0477 \ & 0.7924$\pm$0.0516$^{0,4}$  \ & 0.7961$\pm$0.0515$^{0,4}$ \ \\
ionosphere 
& 0.9917$\pm$0.0019 \ & 0.9828$\pm$0.0036 \ & 0.9754$\pm$0.0055 \ & 0.9870$\pm$0.0039 \ & 0.9888$\pm$0.0031$^{1,4}$  \ & 0.9907$\pm$0.0029$^{2,4}$  \ \\
iris 
& 0.9963$\pm$0.0045 \ & 0.9937$\pm$0.0069 \ & 0.9910$\pm$0.0054 \ & 0.9937$\pm$0.0069 \ & 0.9965$\pm$0.0047$^{0,4}$  \ & 0.9949$\pm$0.0049$^{0,4}$  \ \\
letter 
& 0.9984$\pm$0.0001 \ & 0.9961$\pm$0.0003 \ & 0.9970$\pm$0.0003 \ & 0.9961$\pm$0.0003 \ & 0.9953$\pm$0.0005$^{0,2}$  \ & 0.9972$\pm$0.0003$^{2,3}$  \ \\
magic04 
& 0.9571$\pm$0.0009 \ & 0.9596$\pm$0.0009 \ & 0.9594$\pm$0.0008 \ & 0.9597$\pm$0.0010 \ & 0.9601$\pm$0.0010$^{1,4}$  \ & 0.9607$\pm$0.0010$^{1,4}$  \ \\
optdigits 
& 0.9993$\pm$0.0001 \ & 0.9989$\pm$0.0002 \ & 0.9993$\pm$0.0001 \ & 0.9989$\pm$0.0002 \ & 0.9987$\pm$0.0002$^{0,2}$  \ & 0.9992$\pm$0.0002$^{0,4}$  \ \\
page\_blocks 
& 0.9967$\pm$0.0003 \ & 0.9969$\pm$0.0003 \ & 0.9964$\pm$0.0004 \ & 0.9968$\pm$0.0003 \ & 0.9967$\pm$0.0004$^{0,4}$  \ & 0.9970$\pm$0.0003$^{0,4}$  \ \\
pendigits 
& 0.9997$\pm$0.0001 \ & 0.9994$\pm$0.0001 \ & 0.9996$\pm$0.0001 \ & 0.9994$\pm$0.0001 \ & 0.9993$\pm$0.0002$^{0,3}$  \ & 0.9996$\pm$0.0001$^{0,4}$  \ \\
phoneme 
& 0.9625$\pm$0.0015 \ & 0.9591$\pm$0.0019 \ & 0.9598$\pm$0.0018 \ & 0.9585$\pm$0.0021 \ & 0.9565$\pm$0.0022$^{0,3}$  \ & 0.9631$\pm$0.0019$^{0,4}$  \ \\
  \end{tabular}
\vspace{\baselineskip}

\end{table}


Tables~\ref{tab:acc_superscripts_no_brf_corrected}--\ref{tab:auroc} report the three performance measures (accuracy, log-loss, and AUROC) with their standard errors.
For DM and DW, we report the best results over 10 $\alpha$ values for each dataset.
Additionally, we indicate two numbers for DM/DW : (1) the number of baselines significantly outperformed at the 10\% level, and (2) the number of baselines for which DM/DW is not significantly worse (i.e., at least as good) at the 10\% level.
Significance was assessed using two-tailed z-tests.

DW was strongly competitive, showing statistically significant outperformance over several baselines on most datasets, and performing at least as well as all baselines in almost all cases.
DM was comparable to all baselines on all but a few benchmarks, and it outperformed all four baselines on a single benchmark (the adult dataset) in terms of accuracy (see Appendix B for full experimental results).
We note that ET was frequently among the top-performing baselines.

Table~\ref{tab:alpha_vals} reports, for each dataset, the number of $\alpha$ values (out of 10) that yield performance comparable to the best.
On average, approximately 5.6 $\alpha$ values yielded similar performance, indicating that DM and DW are reasonably robust to the choice of $\alpha$, even when the sweep is conducted on a logarithmic scale.

\begin{table}[t]
\centering
\caption{The best $\alpha$ and the $\alpha$ values yielding comparable accuracy for DM/DW on each dataset. The training sample size $n$ is also shown. }
\label{tab:alpha_vals}
\vspace{\baselineskip}

\scriptsize
\setlength{\tabcolsep}{4pt}\renewcommand{\arraystretch}{1.05}
\begin{tabular}{lcccccc}

Dataset & Method & Best $\alpha$ & Comparable $\alpha$ Values & Count & $n$ \\
\midrule
adult & DM & 0.1573 & 0.0133, 0.1744 & 3 & 48842 \\
adult & DW & 0.0133 & 0.0066, 0.0091 & 3 & 48842 \\
banknote & DM & 4.8365 & 0.8498, 1.1743, 1.5397, 2.3157 & 5 & 1372 \\
banknote & DW & 0.0496 & 0.2862, 0.8498, 4.8365 & 4 & 1372 \\
breast\_cancer & DM & 5.7722 & 0.0570, 0.1428, 0.2972, 2.2322, 6.4756, 10.1280 & 7 & 569 \\
breast\_cancer & DW & 5.7722 & 0.0256, 0.0570, 0.1428, 0.2972, 2.2322, 6.4756, 10.1280 & 8 & 569 \\
credit\_g & DM & 2.4879 & 0.2544, 0.4834, 2.4058, 3.2354, 3.5596 & 6 & 1000 \\
credit\_g & DW & 3.2354 & 0.2544, 0.4834, 2.4058, 2.4879, 3.5596 & 6 & 1000 \\
digits & DM & 1.1196 & 0.5192, 1.0746, 4.8044 & 5 & 1797 \\
digits & DW & 0.5192 & 1.0746, 1.1196, 4.8044 & 5 & 1797 \\
ecoli & DM & 4.2772 & 0.1339, 0.1462, 0.2199, 2.1999, 2.7827, 17.1034, 18.8560, 25.8465 & 9 & 336 \\
ecoli & DW & 0.0236 & 0.1339, 0.1462, 0.2199, 2.1999, 2.7827, 4.2772, 25.8465 & 8 & 336 \\
glass & DM & 6.5920 & 3.5425, 9.8892, 14.8006, 55.8820 & 5 & 214 \\
glass & DW & 6.5920 & 0.0454, 0.0594, 0.2079, 3.5425, 9.8892, 14.8006, 55.8820 & 8 & 214 \\
ionosphere & DM & 0.9005 & 0.0389, 0.1133, 0.3176, 0.8849, 4.0262, 7.3003, 10.6989, 15.7211, 16.0338 & 10 & 351 \\
ionosphere & DW & 0.1133 & 0.0389, 0.3176, 0.8849, 0.9005, 4.0262, 7.3003, 10.6989, 15.7211, 16.0338 & 10 & 351 \\
iris & DM & 0.0482 & 0.1661, 1.5548, 1.5865, 3.5442, 4.0290, 5.6993, 7.0226, 17.7016, 63.2005 & 10 & 150 \\
iris & DW & 4.0290 & 0.0482, 0.1661, 1.5548, 1.5865, 3.5442, 5.6993, 7.0226, 17.7016, 63.2005 & 10 & 150 \\
letter & DM & 0.4779 & 0.3449, 0.4137 & 3 & 20000 \\
letter & DW & 0.2140 & 0.0360, 0.3449, 0.4137, 0.4779 & 5 & 20000 \\
magic04 & DM & 0.6279 & 0.2552, 0.3128 & 3 & 19020 \\
magic04 & DW & 0.3128 & 0.0611, 0.2552, 0.6279 & 4 & 19020 \\
optdigits & DM & 0.6042 & 0.2983 & 2 & 5620 \\
optdigits & DW & 0.6042 & 0.1262, 0.1584, 0.2983 & 4 & 5620 \\
page\_blocks & DM & 0.0551 & 0.0363, 0.0585 & 3 & 5473 \\
page\_blocks & DW & 0.0240 & 0.0064, 0.0076, 0.0363, 0.0551, 0.0585 & 6 & 5473 \\
pendigits & DM & 0.5211 & 0.2325, 0.4928 & 3 & 10992 \\
pendigits & DW & 0.4928 & 0.1355, 0.1541, 0.1793, 0.2325, 0.5211 & 6 & 10992 \\
phoneme & DM & 0.4225 &  & 1 & 5404 \\
phoneme & DW & 0.0842 & 0.0431, 0.0449, 0.1244, 0.1984, 0.4225 & 6 & 5404 \\
\end{tabular}

\end{table}

\clearpage
\section{Discussion}

\subsection{Summary of Findings}
We introduce a Dirichlet-weighted randomization framework for ensemble learning, with a focus on improving Random Forests.
The computational overhead of sampling Dirichlet weights is minimal.
By tuning the concentration parameter of the Dirichlet distribution, we control the dispersion of weights across trees, thereby diversifying bootstrap samples and reducing inter-tree correlation.
We also derive a soft threshold for the concentration parameter, approximately inversely proportional to the size of the training data, beyond which larger $\alpha$ values have little effect on the bootstrapped samples and the resulting learners.

Aggregated predictions were comparable over a reasonably wide range of $\alpha$ values (on a logarithmic scale) for most benchmark datasets.
Dirichlet-weighted (DW) Random Forests often surpassed strong baselines, whereas Dirichlet-Multinomial Bagging (DM) Random Forests was broadly competitive but not consistently superior.
These trends were observed across all three performance metrics.


\subsection{Future Work}
An interesting direction for future work is to investigate the conditions under which Dirichlet-weighted bagging is most beneficial---for instance, whether the gains depend on properties such as the nonlinearity of the predictor-response relationship or the degree of clustering in the training data. A deeper understanding of how randomized weighting influences predictive performance could enable the adaptive use of different randomization schemes.

\bibliography{refs}

\clearpage
\appendix
\thispagestyle{empty}

\onecolumn
\section{Missing Proofs}
\label{app:proofs}

\subsection{Proof of Lemma 2.}
$$
\mathbb{E}\|p-u\|_2^2=\sum_{i=1}^n \mathbb{E}\left(p_i-\frac{1}{n}\right)^2=\sum_{i=1}^n\left(\operatorname{Var}\left(p_i\right)+\left(\mathbb{E} p_i-\frac{1}{n}\right)^2\right) .
$$

For the symmetric Dirichlet, $\mathbb{E} p_i=\frac{\alpha}{n \alpha}=\frac{1}{n}$, so
$$
\mathbb{E}\|p-u\|_2^2=\sum_{i=1}^n \operatorname{Var}\left(p_i\right)=n \operatorname{Var}\left(p_1\right) .
$$
Under $\operatorname{Dir}\left(\alpha 1_n\right)$, each coordinate $p_i$ marginally has a Beta distribution:
$$
p_1 \sim \operatorname{Beta}(\alpha,(n-1) \alpha) .
$$
For $X \sim \operatorname{Beta}(a, b), \operatorname{Var}(X)=\frac{a b}{(a+b)^2(a+b+1)}$. Plugging $a=\alpha, b=(n-1) \alpha$,
$$
\operatorname{Var}\left(p_1\right)=\frac{\alpha(n-1) \alpha}{(n \alpha)^2(n \alpha+1)}=\frac{n-1}{n^2(n \alpha+1)}
$$

Therefore,
$$
\mathrm{E}\|p-u\|_2^2=n \cdot \frac{n-1}{n^2(n \alpha+1)}=\frac{n-1}{n(n \alpha+1)}
$$

\subsection{Derivation of $p(\alpha)$}\label{app:p-alpha-derivation}
Recall $p(\alpha)=\E[U/n]$, where $U$ is the number of distinct training points appearing at least once in a size-$n$ bag drawn via the Dirichlet–multinomial mechanism:
$\boldsymbol{p}\sim\Dir(\alpha\mathbf 1)$ and $(i_1,\ldots,i_n)\sim\mathrm{Multinomial}(n,\boldsymbol{p})$.

Fix an index $i\in\{1,\ldots,n\}$ and write $W:=p_i$.
By the marginal property of the Dirichlet,
\[
W \sim \mathrm{Beta}\!\big(\alpha,(n-1)\alpha\big).
\]

Conditioning on $W$, each of the $n$ draws picks index $i$ with probability $W$ independently.
Hence the probability that $i$ is excluded from the bag is
\[
\Pr(\text{$i$ excluded}\mid W)=(1-W)^n.
\]

Let $I_i=\mathbf{1}\{\text{$i$ appears at least once}\}$.
Then
\[
\E[I_i]
= \E\big[\Pr(I_i=1\mid W)\big]
= \E\big[1-(1-W)^n\big]
= 1-\E[(1-W)^n].
\]
By symmetry over $i$,
\[
p(\alpha)=\E\!\left[\frac{U}{n}\right]
= \frac{1}{n}\sum_{i=1}^n \E[I_i]
= \E[I_1]
= 1-\E[(1-W)^n].
\]

With $W\sim\mathrm{Beta}(\alpha,(n-1)\alpha)$ having density
\( f(w)=\dfrac{w^{\alpha-1}(1-w)^{(n-1)\alpha-1}}{B(\alpha,(n-1)\alpha)}\),
\[
\E[(1-W)^n]
= \int_0^1 (1-w)^n f(w)\,dw
= \frac{1}{B(\alpha,(n-1)\alpha)}\int_0^1 w^{\alpha-1}(1-w)^{(n-1)\alpha+n-1}\,dw
= \frac{B\!\big(\alpha,(n-1)\alpha+n\big)}{B\!\big(\alpha,(n-1)\alpha\big)},
\]
using the Beta function identity \(B(a,b)=\int_0^1 t^{a-1}(1-t)^{b-1}\,dt\).

\[
\;p(\alpha)\;=\;\E\!\left[\frac{U}{n}\right]
\;=\;1-\frac{B\!\big(\alpha,(n-1)\alpha+n\big)}{B\!\big(\alpha,(n-1)\alpha\big)}\;
\]

\textbf{Sanity checks.}
As $\alpha\to\infty$, $W\to 1/n$ so $p(\alpha)\to 1-(1-1/n)^n \approx 1-e^{-1}\;( \approx 0.632)$ (Efron bootstrap).
For $\alpha=1$, $p(\alpha)=\dfrac{n}{2n-1}\to \tfrac{1}{2}$ as $n\to\infty$.

\subsection{Note: BB/WLB equivalence}\label{app:bbwlb}
\subsubsection{Gamma-to-Dirichlet normalization}
Let $G_i \stackrel{ind}{\sim}\mathrm{Gamma}(\alpha,1)$ for $i=1,\dots,n$, and set
$T=\sum_{i=1}^n G_i$ and $W_i = G_i/T$. Then $\mathbf W=(W_1,\dots,W_n)\sim\Dir(\alpha\mathbf 1)$.
In particular, for $\alpha=1$, if $\tilde w_i\stackrel{iid}{\sim}\mathrm{Exp}(1)$ and
$w_i=\tilde w_i/\sum_j \tilde w_j$, then $\mathbf w\sim\Dir(1,\ldots,1)$.

\textit{Sketch.}
The joint density factorizes into a product of independent Gamma terms.
A standard change of variables from $(G_1,\dots,G_n)$ to $(T,W_1,\dots,W_{n-1})$
(with $W_n=1-\sum_{i<n}W_i$) yields the Dirichlet density on $\mathbf W$ and an
independent Gamma on $T$. The $\alpha=1$ case follows since $\mathrm{Exp}(1)=\mathrm{Gamma}(1,1)$.

\subsubsection{BB/WLB as $\alpha=1$ Dirichlet weighting}
For learners that accept per-observation weights, the Bayesian bootstrap (BB) and the
Weighted Likelihood Bootstrap (WLB) yield identical sampling distributions for the weight vector,
namely $\Dir(1,\ldots,1)$. Therefore, both are exactly our Dirichlet--weighted scheme at $\alpha=1$.

\subsection{Random Search for $\alpha$}\label{app:alpha_search}

Recall that $\textbf{w} \sim \operatorname{Dir}(\alpha \mathbf{1}_n)$, each coordinate has
$$\operatorname{Var}(w_i)
= \frac{n-1}{n^2\,(n\alpha+1)}
\propto \frac{1}{1+n\alpha}.$$
Thus, the dispersion scales approximately as $1/(1+n\alpha)$. To take steps that are 'uniform in effect', we work on the transformed axis $z := \log(1+n\alpha).$

Let the bounds $\alpha_{\min}$ and $\alpha_{\max}$ be given and define
\[
z_{\min}=\log\!\bigl(1+n\alpha_{\min}\bigr),
\qquad
z_{\max}=\log\!\bigl(1+n\alpha_{\max}\bigr).
\]
Sample $U \sim \operatorname{Unif}(0,1)$ and draw uniformly in $z$:
$$z \;=\; z_{\min} + U\,(z_{\max}-z_{\min})$$
The mapping to $\alpha$ gives the following result.
\begin{align*}
\alpha= \frac{\bigl(1+n\alpha_{\min}\bigr)\left(\dfrac{1+n\alpha_{\max}}{1+n\alpha_{\min}}\right)^{U} - 1}{n}
\end{align*}

\subsection{A Tighter Bound Analysis for $\alpha_{RF}$}
\subsubsection{High-probability bound in $\ell_\infty$ sense} Let $\mathbf w\sim\Dir(\alpha\,\mathbf 1_n)$ with $\alpha>0$, and fix $\varepsilon>0$, $\delta\in(0,1)$.
If
\[
\alpha \;\ge\; \frac{\,n-1\,}{\,n\,\varepsilon^{2}\,\delta\,}\;-\;\frac{1}{n},
\]
then $\mathbf w$ is $\epsilon$ -uniform weights in the $\ell_\infty$ sense with probability at least $1-\delta$.

\textit{Proof.}
By Chebyshev, for each $i$,
$\Pr\!\big(|w_i-\tfrac1n|>\varepsilon\big)\le \Var(w_i)/\varepsilon^2$.
For $\Dir(\alpha\,\mathbf 1_n)$, $\Var(w_i)=\dfrac{n-1}{n^2(n\alpha+1)}$.
A union bound to $i=1,\dots,n$ yields
\[
\Pr\!\Big(\max_i|w_i-\tfrac1n|>\varepsilon\Big)
\ \le\ \frac{n\,\Var(w_i)}{\varepsilon^2}
\ =\ \frac{n-1}{n(n\alpha+1)\,\varepsilon^2}\,.
\]
Requiring the right-hand side to be $\le \delta$ and solving for $\alpha$ establishes the claim.

\subsubsection{Expectation bound in $\ell_\infty$} Let $\mathbf w\sim\Dir(\alpha\,\mathbf 1_n)$ and $\mathbf u=\tfrac1n\mathbf 1_n$. Then
\[
\E\!\;\|\mathbf w - \mathbf u\|_\infty\;
\;\le\; 2\,\sqrt{\frac{n-1}{\,n\,(n\alpha+1)\,}}\,.
\]
In particular, we obtain $\E\|\mathbf w - \mathbf u\|_\infty \le \varepsilon$ provided that
\begin{equation}
\label{exption-bound}
\alpha \;\ge\; \frac{\,4(n-1)\,}{\,n^2\,\varepsilon^2\,}\;-\;\frac{1}{n}.
\end{equation}

\textit{Proof.}
Suppose $X=\|\mathbf w - \mathbf u\|_\infty$, we have
\[
\E[X]=\int_0^\infty \Pr(X>t)\,dt
\;\le\; \int_0^\infty \min\!\Big\{1,\ \frac{A}{t^2}\Big\}\,dt
\]
\(
\text{with } A=\frac{n-1}{n(n\alpha+1)}.
\)
Splitting at $t_0=\sqrt{A}$ gives $\int_0^{t_0}1\,dt+\int_{t_0}^\infty A t^{-2}\,dt
= \sqrt{A}+A/\sqrt{A}=2\sqrt{A}$, proving the bound. The condition \eqref{exption-bound} follows by requiring $2\sqrt{A}\le\varepsilon$.

\section{Additional Experimental Results}
\label{app:additional-results}

\subsection{Dataset Information}\label{app:dataset-info}

\begin{table}[h]
\caption{Dataset Statistics} \label{tab:datasets}
\begin{center}
\begin{tabular}{lcccc}
\textbf{Dataset} & \boldmath$n$ & \boldmath$C$ & \textbf{Imbalance} & \boldmath$\alpha_{\mathrm{RF}}(n,\varepsilon{=}0.01)$ \\
\hline \\
Adult & 48{,}842 & 2 & high & 0.205 \\
Banknote & 1{,}372 & 2 & low  & 7.283 \\
Breast Cancer (WDBC) & 569 & 2 & mid & 17.542 \\
Credit-G (German) & 1{,}000 & 2 & high & 9.989 \\
Digits (sklearn) & 1{,}797 & 10 & low & 5.561 \\
Ecoli & 336 & 8 & high & 29.670 \\
Glass & 214 & 6 & high & 46.506 \\
Ionosphere & 351 & 2 & mid & 28.406 \\
Iris & 150 & 3 & low & 66.216 \\
Letter & 20{,}000 & 26 & low & 0.500 \\
MAGIC Gamma (magic04) & 19{,}020 & 2 & high & 0.526 \\
Optdigits & 5{,}620 & 10 & low & 1.779 \\
Page Blocks & 5{,}473 & 5 & high & 1.827 \\
Pendigits & 10{,}992 & 10 & low & 0.910 \\
Phoneme & 5{,}404 & 2 & mid & 1.850 \\
\end{tabular}
\end{center}
\end{table}

\subsection{Experiment Metrics on Classifications over 10 Seeds}

\begin{table}[h]
\caption{Results for Adult} \label{tab:adult}
\begin{center}
\begin{tabular}{llllllll}
\textbf{method} &\textbf{alpha} &\textbf{accuracy} &\textbf{logloss} &\textbf{auroc} &\textbf{n} &\textbf{d} &\textbf{C} \\
\hline \\
ET &  & 0.8293±0.0009 & 0.5109±0.0078 & 0.8779±0.0010 & 45222 & 105 & 2 \\
RF-bootstrap &  & 0.8491±0.0011 & 0.3573±0.0034 & 0.9018±0.0010 & 45222 & 105 & 2 \\
RF-no-bootstrap &  & 0.8431±0.0010 & 0.4893±0.0077 & 0.8870±0.0010 & 45222 & 105 & 2 \\
Subsample &  & 0.8499±0.0011 & 0.3616±0.0047 & 0.9015±0.0011 & 45222 & 105 & 2 \\
Bayesian RF & 1.0 & 0.8436±0.0009 & 0.4576±0.0093 & 0.8894±0.0010 & 45222 & 105 & 2 \\
DM & 0 & 0.7808±0.0032 & 0.4286±0.0020 & 0.8708±0.0026 & 45222 & 105 & 2 \\
DM & 0.0001 & 0.8152±0.0009 & 0.3969±0.0011 & 0.8806±0.0011 & 45222 & 105 & 2 \\
DM & 0.0003 & 0.8283±0.0010 & 0.3732±0.0011 & 0.8923±0.0014 & 45222 & 105 & 2 \\
DM & 0.0004 & 0.8307±0.0014 & 0.3706±0.0014 & 0.8924±0.0014 & 45222 & 105 & 2 \\
DM & 0.0009 & 0.8378±0.0014 & 0.3559±0.0013 & 0.8982±0.0013 & 45222 & 105 & 2 \\
DM & 0.0066 & 0.8523±0.0011 & 0.3321±0.0018 & 0.9080±0.0010 & 45222 & 105 & 2 \\
DM & 0.0090 & 0.8537±0.0009 & 0.3287±0.0012 & 0.9087±0.0010 & 45222 & 105 & 2 \\
DM & 0.0133 & 0.8556±0.0009 & 0.3258±0.0017 & 0.9097±0.0009 & 45222 & 105 & 2 \\
DM & 0.1573 & 0.8561±0.0011 & 0.3269±0.0026 & 0.9100±0.0010 & 45222 & 105 & 2 \\
DM & 0.1744 & 0.8559±0.0013 & 0.3271±0.0037 & 0.9100±0.0011 & 45222 & 105 & 2 \\
DW & 0 & 0.8187±0.0011 & 0.3926±0.0009 & 0.8835±0.0017 & 45222 & 105 & 2 \\
DW & 0.0001 & 0.8301±0.0018 & 0.3711±0.0013 & 0.8917±0.0016 & 45222 & 105 & 2 \\
DW & 0.0003 & 0.8394±0.0013 & 0.3539±0.0012 & 0.8987±0.0013 & 45222 & 105 & 2 \\
DW & 0.0004 & 0.8403±0.0013 & 0.3517±0.0009 & 0.8998±0.0011 & 45222 & 105 & 2 \\
DW & 0.0009 & 0.8463±0.0011 & 0.3413±0.0012 & 0.9037±0.0011 & 45222 & 105 & 2 \\
DW & 0.0066 & 0.8561±0.0009 & 0.3216±0.0015 & 0.9102±0.0010 & 45222 & 105 & 2 \\
DW & 0.0090 & 0.8563±0.0013 & 0.3198±0.0015 & 0.9106±0.0010 & 45222 & 105 & 2 \\
DW & 0.0133 & 0.8567±0.0011 & 0.3178±0.0015 & 0.9110±0.0010 & 45222 & 105 & 2 \\
DW & 0.1573 & 0.8432±0.0010 & 0.3988±0.0039 & 0.8908±0.0010 & 45222 & 105 & 2 \\
DW & 0.1744 & 0.8436±0.0010 & 0.4056±0.0036 & 0.8906±0.0010 & 45222 & 105 & 2 \\

\end{tabular}
\end{center}
\end{table}

\begin{table}[h]
\caption{Results for Banknote} \label{tab:banknote}
\begin{center}
\begin{tabular}{llllllll}
\textbf{method} &\textbf{alpha} &\textbf{accuracy} &\textbf{logloss} &\textbf{auroc} &\textbf{n} &\textbf{d} &\textbf{C} \\
\hline \\
ET &  & 0.9993±0.0005 & 0.0158±0.0011 & 1.0000±0.0000 & 1372 & 4 & 2 \\
RF-bootstrap &  & 0.9938±0.0012 & 0.0309±0.0022 & 0.9998±0.0001 & 1372 & 4 & 2 \\
RF-no-bootstrap &  & 0.9945±0.0011 & 0.0240±0.0020 & 0.9998±0.0001 & 1372 & 4 & 2 \\
Subsample &  & 0.9942±0.0006 & 0.0304±0.0022 & 0.9997±0.0001 & 1372 & 4 & 2 \\
Bayesian RF & 1.0 & 0.9945±0.0011 & 0.0229±0.0019 & 0.9999±0.0001 & 1372 & 4 & 2 \\
DM & 0.0007 & 0.8898±0.0069 & 0.4070±0.0065 & 0.9691±0.0027 & 1372 & 4 & 2 \\
DM & 0.0023 & 0.9280±0.0060 & 0.2803±0.0054 & 0.9815±0.0029 & 1372 & 4 & 2 \\
DM & 0.0047 & 0.9484±0.0059 & 0.2211±0.0056 & 0.9892±0.0017 & 1372 & 4 & 2 \\
DM & 0.0495 & 0.9836±0.0026 & 0.0868±0.0038 & 0.9990±0.0003 & 1372 & 4 & 2 \\
DM & 0.2862 & 0.9916±0.0011 & 0.0470±0.0032 & 0.9996±0.0001 & 1372 & 4 & 2 \\
DM & 0.8498 & 0.9931±0.0008 & 0.0370±0.0025 & 0.9997±0.0001 & 1372 & 4 & 2 \\
DM & 1.1743 & 0.9935±0.0011 & 0.0349±0.0023 & 0.9998±0.0001 & 1372 & 4 & 2 \\
DM & 1.5397 & 0.9942±0.0008 & 0.0346±0.0024 & 0.9997±0.0001 & 1372 & 4 & 2 \\
DM & 2.3156 & 0.9935±0.0009 & 0.0332±0.0022 & 0.9997±0.0001 & 1372 & 4 & 2 \\
DM & 4.8365 & 0.9945±0.0006 & 0.0315±0.0022 & 0.9998±0.0001 & 1372 & 4 & 2 \\
DW & 0.0007 & 0.9295±0.0067 & 0.2214±0.0053 & 0.9889±0.0015 & 1372 & 4 & 2 \\
DW & 0.0023 & 0.9658±0.0044 & 0.1342±0.0050 & 0.9971±0.0006 & 1372 & 4 & 2 \\
DW & 0.0047 & 0.9858±0.0027 & 0.0932±0.0037 & 0.9994±0.0002 & 1372 & 4 & 2 \\
DW & 0.0495 & 0.9971±0.0009 & 0.0280±0.0017 & 1.0000±0.0000 & 1372 & 4 & 2 \\
DW & 0.2862 & 0.9956±0.0007 & 0.0232±0.0016 & 1.0000±0.0000 & 1372 & 4 & 2 \\
DW & 0.8498 & 0.9953±0.0009 & 0.0222±0.0017 & 0.9999±0.0000 & 1372 & 4 & 2 \\
DW & 1.1743 & 0.9945±0.0010 & 0.0226±0.0018 & 0.9999±0.0000 & 1372 & 4 & 2 \\
DW & 1.5397 & 0.9949±0.0008 & 0.0230±0.0020 & 0.9998±0.0001 & 1372 & 4 & 2 \\
DW & 2.3156 & 0.9938±0.0009 & 0.0234±0.0018 & 0.9998±0.0000 & 1372 & 4 & 2 \\
DW & 4.8365 & 0.9949±0.0011 & 0.0229±0.0019 & 0.9998±0.0001 & 1372 & 4 & 2 \\

\end{tabular}
\end{center}
\end{table}

\begin{table}[h]
\caption{Results for Breast Cancer} \label{tab:breast-cancer}
\begin{center}
\begin{tabular}{llllllll}
\textbf{method} &\textbf{alpha} &\textbf{accuracy} &\textbf{logloss} &\textbf{auroc} &\textbf{n} &\textbf{d} &\textbf{C} \\
\hline \\
ET &  & 0.9649±0.0035 & 0.0982±0.0061 & 0.9941±0.0016 & 569 & 30 & 2 \\
RF-bootstrap &  & 0.9649±0.0039 & 0.2130±0.0507 & 0.9893±0.0033 & 569 & 30 & 2 \\
RF-no-bootstrap &  & 0.9667±0.0025 & 0.1541±0.0431 & 0.9913±0.0027 & 569 & 30 & 2 \\
Subsample &  & 0.9684±0.0035 & 0.1304±0.0305 & 0.9922±0.0024 & 569 & 30 & 2 \\
Bayesian RF & 1.0 & 0.9719±0.0037 & 0.1278±0.0345 & 0.9921±0.0028 & 569 & 30 & 2 \\
DM & 0.0012 & 0.9298±0.0070 & 0.3449±0.0057 & 0.9892±0.0024 & 569 & 30 & 2 \\
DM & 0.0014 & 0.9439±0.0069 & 0.3290±0.0066 & 0.9878±0.0025 & 569 & 30 & 2 \\
DM & 0.0255 & 0.9570±0.0048 & 0.1410±0.0069 & 0.9895±0.0029 & 569 & 30 & 2 \\
DM & 0.0570 & 0.9632±0.0043 & 0.1281±0.0078 & 0.9900±0.0028 & 569 & 30 & 2 \\
DM & 0.1427 & 0.9623±0.0035 & 0.1173±0.0084 & 0.9909±0.0026 & 569 & 30 & 2 \\
DM & 0.2972 & 0.9640±0.0036 & 0.1132±0.0086 & 0.9915±0.0025 & 569 & 30 & 2 \\
DM & 2.2321 & 0.9658±0.0042 & 0.1597±0.0429 & 0.9909±0.0030 & 569 & 30 & 2 \\
DM & 5.7722 & 0.9684±0.0035 & 0.1597±0.0433 & 0.9908±0.0030 & 569 & 30 & 2 \\
DM & 6.4755 & 0.9667±0.0025 & 0.1604±0.0401 & 0.9908±0.0029 & 569 & 30 & 2 \\
DM & 10.1280 & 0.9640±0.0044 & 0.1582±0.0436 & 0.9914±0.0030 & 569 & 30 & 2 \\
DW & 0.0012 & 0.9404±0.0060 & 0.1752±0.0061 & 0.9897±0.0022 & 569 & 30 & 2 \\
DW & 0.0014 & 0.9456±0.0047 & 0.1667±0.0069 & 0.9900±0.0023 & 569 & 30 & 2 \\
DW & 0.0255 & 0.9649±0.0045 & 0.1258±0.0254 & 0.9918±0.0026 & 569 & 30 & 2 \\
DW & 0.0570 & 0.9693±0.0042 & 0.1031±0.0078 & 0.9930±0.0020 & 569 & 30 & 2 \\
DW & 0.1427 & 0.9684±0.0048 & 0.1637±0.0362 & 0.9904±0.0031 & 569 & 30 & 2 \\
DW & 0.2972 & 0.9649±0.0039 & 0.1559±0.0433 & 0.9914±0.0031 & 569 & 30 & 2 \\
DW & 2.2321 & 0.9675±0.0045 & 0.1265±0.0341 & 0.9919±0.0026 & 569 & 30 & 2 \\
DW & 5.7722 & 0.9702±0.0033 & 0.2083±0.0513 & 0.9900±0.0033 & 569 & 30 & 2 \\
DW & 6.4755 & 0.9675±0.0041 & 0.1806±0.0493 & 0.9907±0.0031 & 569 & 30 & 2 \\
DW & 10.1280 & 0.9684±0.0037 & 0.2085±0.0506 & 0.9900±0.0031 & 569 & 30 & 2 \\

\end{tabular}
\end{center}
\end{table}

\begin{table}[h]
\caption{Results for Credit German} \label{tab:credit-g}
\begin{center}
\begin{tabular}{llllllll}
\textbf{method} &\textbf{alpha} &\textbf{accuracy} &\textbf{logloss} &\textbf{auroc} &\textbf{n} &\textbf{d} &\textbf{C} \\
\hline \\
ET &  & 0.7510±0.0084 & 0.5024±0.0106 & 0.7741±0.0128 & 1000 & 61 & 2 \\
RF-bootstrap &  & 0.7650±0.0098 & 0.4941±0.0099 & 0.7868±0.0147 & 1000 & 61 & 2 \\
RF-no-bootstrap &  & 0.7540±0.0078 & 0.4953±0.0104 & 0.7820±0.0136 & 1000 & 61 & 2 \\
Subsample &  & 0.7615±0.0080 & 0.4908±0.0092 & 0.7910±0.0137 & 1000 & 61 & 2 \\
Bayesian RF & 1.0 & 0.7665±0.0080 & 0.4921±0.0093 & 0.7902±0.0141 & 1000 & 61 & 2 \\
DM & 0.0005 & 0.7000±0.0000 & 0.5931±0.0031 & 0.6860±0.0206 & 1000 & 61 & 2 \\
DM & 0.0009 & 0.7000±0.0000 & 0.5819±0.0030 & 0.7126±0.0183 & 1000 & 61 & 2 \\
DM & 0.0022 & 0.6995±0.0012 & 0.5662±0.0028 & 0.7454±0.0129 & 1000 & 61 & 2 \\
DM & 0.0072 & 0.7035±0.0021 & 0.5478±0.0039 & 0.7623±0.0144 & 1000 & 61 & 2 \\
DM & 0.2543 & 0.7620±0.0053 & 0.5024±0.0074 & 0.7904±0.0140 & 1000 & 61 & 2 \\
DM & 0.4833 & 0.7615±0.0080 & 0.4986±0.0079 & 0.7904±0.0141 & 1000 & 61 & 2 \\
DM & 2.4057 & 0.7680±0.0070 & 0.4927±0.0097 & 0.7904±0.0153 & 1000 & 61 & 2 \\
DM & 2.4879 & 0.7710±0.0077 & 0.4923±0.0090 & 0.7944±0.0139 & 1000 & 61 & 2 \\
DM & 3.2354 & 0.7585±0.0087 & 0.4931±0.0095 & 0.7917±0.0145 & 1000 & 61 & 2 \\
DM & 3.5595 & 0.7660±0.0089 & 0.4926±0.0087 & 0.7931±0.0135 & 1000 & 61 & 2 \\
DW & 0.0005 & 0.7005±0.0005 & 0.5598±0.0035 & 0.7510±0.0130 & 1000 & 61 & 2 \\
DW & 0.0009 & 0.7010±0.0010 & 0.5460±0.0044 & 0.7664±0.0160 & 1000 & 61 & 2 \\
DW & 0.0022 & 0.7050±0.0025 & 0.5348±0.0054 & 0.7734±0.0153 & 1000 & 61 & 2 \\
DW & 0.0072 & 0.7350±0.0050 & 0.5146±0.0050 & 0.7922±0.0129 & 1000 & 61 & 2 \\
DW & 0.2543 & 0.7600±0.0056 & 0.4944±0.0086 & 0.7897±0.0142 & 1000 & 61 & 2 \\
DW & 0.4833 & 0.7610±0.0090 & 0.4917±0.0093 & 0.7932±0.0139 & 1000 & 61 & 2 \\
DW & 2.4057 & 0.7570±0.0094 & 0.4919±0.0105 & 0.7881±0.0143 & 1000 & 61 & 2 \\
DW & 2.4879 & 0.7595±0.0117 & 0.4928±0.0106 & 0.7853±0.0143 & 1000 & 61 & 2 \\
DW & 3.2354 & 0.7615±0.0093 & 0.4918±0.0110 & 0.7881±0.0145 & 1000 & 61 & 2 \\
DW & 3.5595 & 0.7590±0.0103 & 0.4928±0.0099 & 0.7856±0.0136 & 1000 & 61 & 2 \\

\end{tabular}
\end{center}
\end{table}

\begin{table}[h]
\caption{Results for Digits} \label{tab:digits}
\begin{center}
\begin{tabular}{llllllll}
\textbf{method} &\textbf{alpha} &\textbf{accuracy} &\textbf{logloss} &\textbf{auroc} &\textbf{n} &\textbf{d} &\textbf{C} \\
\hline \\
ET &  & 0.9839±0.0015 & 0.2840±0.0025 & 0.9996±0.0001 & 1797 & 64 & 10 \\
RF-bootstrap &  & 0.9769±0.0013 & 0.3098±0.0031 & 0.9994±0.0001 & 1797 & 64 & 10 \\
RF-no-bootstrap &  & 0.9781±0.0017 & 0.2588±0.0033 & 0.9995±0.0001 & 1797 & 64 & 10 \\
Subsample &  & 0.9767±0.0022 & 0.3039±0.0025 & 0.9995±0.0001 & 1797 & 64 & 10 \\
Bayesian RF & 1.0 & 0.9797±0.0021 & 0.2699±0.0029 & 0.9996±0.0001 & 1797 & 64 & 10 \\
DM & 0 & 0.1692±0.0183 & 2.2434±0.0044 & 0.7839±0.0081 & 1797 & 64 & 10 \\
DM & 0.0001 & 0.5397±0.0182 & 2.0021±0.0075 & 0.9198±0.0051 & 1797 & 64 & 10 \\
DM & 0.0005 & 0.7697±0.0077 & 1.7318±0.0054 & 0.9616±0.0021 & 1797 & 64 & 10 \\
DM & 0.0062 & 0.9306±0.0026 & 0.9965±0.0043 & 0.9940±0.0004 & 1797 & 64 & 10 \\
DM & 0.0086 & 0.9403±0.0022 & 0.9087±0.0054 & 0.9951±0.0005 & 1797 & 64 & 10 \\
DM & 0.0098 & 0.9397±0.0030 & 0.8793±0.0045 & 0.9949±0.0004 & 1797 & 64 & 10 \\
DM & 0.5192 & 0.9742±0.0017 & 0.3598±0.0034 & 0.9992±0.0001 & 1797 & 64 & 10 \\
DM & 1.0745 & 0.9744±0.0021 & 0.3371±0.0037 & 0.9994±0.0001 & 1797 & 64 & 10 \\
DM & 1.1195 & 0.9781±0.0017 & 0.3349±0.0032 & 0.9995±0.0001 & 1797 & 64 & 10 \\
DM & 4.8044 & 0.9756±0.0017 & 0.3154±0.0032 & 0.9995±0.0001 & 1797 & 64 & 10 \\
DW & 0 & 0.5442±0.0223 & 1.9910±0.0066 & 0.9259±0.0050 & 1797 & 64 & 10 \\
DW & 0.0001 & 0.8633±0.0075 & 1.4480±0.0063 & 0.9826±0.0009 & 1797 & 64 & 10 \\
DW & 0.0005 & 0.9153±0.0033 & 1.1264±0.0063 & 0.9916±0.0007 & 1797 & 64 & 10 \\
DW & 0.0062 & 0.9689±0.0017 & 0.5460±0.0042 & 0.9987±0.0002 & 1797 & 64 & 10 \\
DW & 0.0086 & 0.9700±0.0018 & 0.4973±0.0041 & 0.9987±0.0002 & 1797 & 64 & 10 \\
DW & 0.0098 & 0.9725±0.0013 & 0.4801±0.0043 & 0.9990±0.0002 & 1797 & 64 & 10 \\
DW & 0.5192 & 0.9817±0.0012 & 0.2764±0.0030 & 0.9996±0.0001 & 1797 & 64 & 10 \\
DW & 1.0745 & 0.9814±0.0019 & 0.2680±0.0032 & 0.9996±0.0001 & 1797 & 64 & 10 \\
DW & 1.1195 & 0.9789±0.0020 & 0.2679±0.0027 & 0.9995±0.0001 & 1797 & 64 & 10 \\
DW & 4.8044 & 0.9789±0.0014 & 0.2611±0.0035 & 0.9996±0.0001 & 1797 & 64 & 10 \\

\end{tabular}
\end{center}
\end{table}

\begin{table}[h]
\caption{Results for Ecoli} \label{tab:ecoli}
\begin{center}
\begin{tabular}{llllllll}
\textbf{method} &\textbf{alpha} &\textbf{accuracy} &\textbf{logloss} &\textbf{auroc} &\textbf{n} &\textbf{d} &\textbf{C} \\
\hline \\
ET &  & 0.8809±0.0071 & 0.3900±0.0168 &  & 336 & 7 & 8 \\
RF-bootstrap &  & 0.8941±0.0098 & 0.4288±0.0550 &  & 336 & 7 & 8 \\
RF-no-bootstrap &  & 0.8824±0.0076 & 0.5224±0.0866 &  & 336 & 7 & 8 \\
Bayesian RF & 1.0 & 0.8897±0.0077 & 0.4656±0.0646 &  & 336 & 7 & 8 \\
Subsample &  & 0.8985±0.0080 & 0.5175±0.0839 &  & 336 & 7 & 8 \\
DM & 0.0236 & 0.8544±0.0083 & 0.5448±0.0083 &  & 336 & 7 & 8 \\
DM & 0.1338 & 0.8956±0.0077 & 0.4217±0.0133 &  & 336 & 7 & 8 \\
DM & 0.1462 & 0.8971±0.0073 & 0.4193±0.0128 &  & 336 & 7 & 8 \\
DM & 0.2199 & 0.9000±0.0092 & 0.4066±0.0149 &  & 336 & 7 & 8 \\
DM & 2.1998 & 0.8985±0.0077 & 0.4264±0.0554 &  & 336 & 7 & 8 \\
DM & 2.7826 & 0.8956±0.0083 & 0.3794±0.0162 &  & 336 & 7 & 8 \\
DM & 4.2772 & 0.9000±0.0072 & 0.4228±0.0500 &  & 336 & 7 & 8 \\
DM & 17.1033 & 0.8985±0.0083 & 0.4222±0.0497 &  & 336 & 7 & 8 \\
DM & 18.8559 & 0.8956±0.0083 & 0.5200±0.0829 &  & 336 & 7 & 8 \\
DM & 25.8465 & 0.8971±0.0082 & 0.3730±0.0154 &  & 336 & 7 & 8 \\
DW & 0.0236 & 0.8971±0.0076 & 0.3961±0.0135 &  & 336 & 7 & 8 \\
DW & 0.1338 & 0.8912±0.0080 & 0.3825±0.0215 &  & 336 & 7 & 8 \\
DW & 0.1462 & 0.8882±0.0080 & 0.4702±0.0698 &  & 336 & 7 & 8 \\
DW & 0.2199 & 0.8897±0.0083 & 0.4325±0.0499 &  & 336 & 7 & 8 \\
DW & 2.1998 & 0.8824±0.0079 & 0.5171±0.0878 &  & 336 & 7 & 8 \\
DW & 2.7826 & 0.8897±0.0070 & 0.4730±0.0810 &  & 336 & 7 & 8 \\
DW & 4.2772 & 0.8809±0.0077 & 0.5155±0.0855 &  & 336 & 7 & 8 \\
DW & 17.1033 & 0.8735±0.0085 & 0.5219±0.0854 &  & 336 & 7 & 8 \\
DW & 18.8559 & 0.8779±0.0082 & 0.5240±0.0829 &  & 336 & 7 & 8 \\
DW & 25.8465 & 0.8809±0.0080 & 0.5179±0.0860 &  & 336 & 7 & 8 \\

\end{tabular}
\end{center}
\end{table}

\begin{table}[h]
\caption{Results for Glass} \label{tab:glass}
\begin{center}
\begin{tabular}{llllllll}
\textbf{method} &\textbf{alpha} &\textbf{accuracy} &\textbf{logloss} &\textbf{auroc} &\textbf{n} &\textbf{d} &\textbf{C} \\
\hline \\
ET &  & 0.7907±0.0125 & 0.6105±0.0215 & 0.9591±0.0044 & 214 & 9 & 6 \\
RF-bootstrap &  & 0.7767±0.0131 & 0.6565±0.0285 & 0.9492±0.0050 & 214 & 9 & 6 \\
RF-no-bootstrap &  & 0.7698±0.0203 & 0.7119±0.0993 & 0.9486±0.0057 & 214 & 9 & 6 \\
Subsample &  & 0.7837±0.0155 & 0.6526±0.0299 & 0.9475±0.0054 & 214 & 9 & 6 \\
Bayesian RF & 1.0 & 0.7907±0.0177 & 0.6154±0.0320 & 0.9525±0.0055 & 214 & 9 & 6 \\
DM & 0.0012 & 0.4186±0.0222 & 1.3903±0.0066 & 0.8357±0.0116 & 214 & 9 & 6 \\
DM & 0.0029 & 0.4558±0.0174 & 1.2868±0.0042 & 0.8509±0.0101 & 214 & 9 & 6 \\
DM & 0.0453 & 0.6628±0.0184 & 0.8780±0.0152 & 0.9294±0.0049 & 214 & 9 & 6 \\
DM & 0.0593 & 0.6512±0.0208 & 0.8510±0.0189 & 0.9293±0.0084 & 214 & 9 & 6 \\
DM & 0.2078 & 0.7326±0.0220 & 0.7336±0.0210 & 0.9424±0.0066 & 214 & 9 & 6 \\
DM & 3.5425 & 0.7767±0.0148 & 0.6552±0.0269 & 0.9531±0.0045 & 214 & 9 & 6 \\
DM & 6.5919 & 0.7791±0.0144 & 0.6569±0.0281 & 0.9483±0.0053 & 214 & 9 & 6 \\
DM & 9.8891 & 0.7767±0.0181 & 0.6514±0.0305 & 0.9507±0.0052 & 214 & 9 & 6 \\
DM & 14.8005 & 0.7767±0.0140 & 0.6506±0.0281 & 0.9540±0.0054 & 214 & 9 & 6 \\
DM & 55.8819 & 0.7744±0.0163 & 0.6450±0.0295 & 0.9525±0.0052 & 214 & 9 & 6 \\
DW & 0.0012 & 0.6023±0.0164 & 1.0855±0.0129 & 0.8931±0.0081 & 214 & 9 & 6 \\
DW & 0.0029 & 0.6395±0.0160 & 0.9449±0.0154 & 0.9132±0.0070 & 214 & 9 & 6 \\
DW & 0.0453 & 0.7767±0.0160 & 0.6330±0.0226 & 0.9523±0.0065 & 214 & 9 & 6 \\
DW & 0.0593 & 0.7605±0.0159 & 0.6195±0.0235 & 0.9571±0.0051 & 214 & 9 & 6 \\
DW & 0.2078 & 0.7814±0.0174 & 0.6129±0.0262 & 0.9572±0.0050 & 214 & 9 & 6 \\
DW & 3.5425 & 0.7837±0.0211 & 0.7710±0.1680 & 0.9506±0.0055 & 214 & 9 & 6 \\
DW & 6.5919 & 0.7837±0.0174 & 0.7698±0.1660 & 0.9502±0.0054 & 214 & 9 & 6 \\
DW & 9.8891 & 0.7628±0.0179 & 0.8601±0.1844 & 0.9502±0.0057 & 214 & 9 & 6 \\
DW & 14.8005 & 0.7814±0.0174 & 0.7099±0.1002 & 0.9505±0.0056 & 214 & 9 & 6 \\
DW & 55.8819 & 0.7581±0.0163 & 0.7045±0.0978 & 0.9503±0.0059 & 214 & 9 & 6 \\

\end{tabular}
\end{center}
\end{table}

\begin{table}[h]
\caption{Results for Ionosphere} \label{tab:ionosphere}
\begin{center}
\begin{tabular}{llllllll}
\textbf{method} &\textbf{alpha} &\textbf{accuracy} &\textbf{logloss} &\textbf{auroc} &\textbf{n} &\textbf{d} &\textbf{C} \\
\hline \\
ET &  & 0.9507±0.0057 & 0.1632±0.0072 & 0.9917±0.0019 & 351 & 34 & 2 \\
RF-bootstrap &  & 0.9423±0.0065 & 0.1830±0.0110 & 0.9828±0.0036 & 351 & 34 & 2 \\
RF-no-bootstrap &  & 0.9380±0.0079 & 0.1639±0.0125 & 0.9847±0.0034 & 351 & 34 & 2 \\
Subsample &  & 0.9451±0.0074 & 0.1773±0.0103 & 0.9840±0.0033 & 351 & 34 & 2 \\
Bayesian RF & 1.0 & 0.9451±0.0074 & 0.1723±0.0109 & 0.9849±0.0034 & 351 & 34 & 2 \\
DM & 0.0389 & 0.9380±0.0082 & 0.2895±0.0075 & 0.9735±0.0048 & 351 & 34 & 2 \\
DM & 0.1133 & 0.9380±0.0073 & 0.2385±0.0084 & 0.9769±0.0047 & 351 & 34 & 2 \\
DM & 0.3176 & 0.9465±0.0062 & 0.2043±0.0101 & 0.9804±0.0041 & 351 & 34 & 2 \\
DM & 0.8848 & 0.9437±0.0076 & 0.1919±0.0101 & 0.9824±0.0037 & 351 & 34 & 2 \\
DM & 0.9005 & 0.9465±0.0062 & 0.1908±0.0086 & 0.9820±0.0031 & 351 & 34 & 2 \\
DM & 4.0261 & 0.9451±0.0068 & 0.1828±0.0099 & 0.9828±0.0035 & 351 & 34 & 2 \\
DM & 7.3003 & 0.9437±0.0079 & 0.1798±0.0103 & 0.9831±0.0033 & 351 & 34 & 2 \\
DM & 10.6988 & 0.9465±0.0066 & 0.1820±0.0104 & 0.9827±0.0035 & 351 & 34 & 2 \\
DM & 15.7211 & 0.9437±0.0066 & 0.1805±0.0110 & 0.9841±0.0034 & 351 & 34 & 2 \\
DM & 16.0337 & 0.9423±0.0074 & 0.1817±0.0102 & 0.9833±0.0034 & 351 & 34 & 2 \\
DW & 0.0389 & 0.9451±0.0068 & 0.1926±0.0078 & 0.9870±0.0029 & 351 & 34 & 2 \\
DW & 0.1133 & 0.9507±0.0067 & 0.1821±0.0087 & 0.9870±0.0030 & 351 & 34 & 2 \\
DW & 0.3176 & 0.9408±0.0075 & 0.1770±0.0097 & 0.9858±0.0034 & 351 & 34 & 2 \\
DW & 0.8848 & 0.9437±0.0066 & 0.1717±0.0102 & 0.9852±0.0030 & 351 & 34 & 2 \\
DW & 0.9005 & 0.9394±0.0076 & 0.1704±0.0094 & 0.9852±0.0031 & 351 & 34 & 2 \\
DW & 4.0261 & 0.9408±0.0078 & 0.1683±0.0123 & 0.9840±0.0036 & 351 & 34 & 2 \\
DW & 7.3003 & 0.9423±0.0080 & 0.1640±0.0106 & 0.9850±0.0030 & 351 & 34 & 2 \\
DW & 10.6988 & 0.9479±0.0063 & 0.1619±0.0110 & 0.9860±0.0030 & 351 & 34 & 2 \\
DW & 15.7211 & 0.9366±0.0076 & 0.1628±0.0121 & 0.9850±0.0034 & 351 & 34 & 2 \\
DW & 16.0337 & 0.9380±0.0082 & 0.1671±0.0118 & 0.9837±0.0034 & 351 & 34 & 2 \\

\end{tabular}
\end{center}
\end{table}

\begin{table}[h]
\caption{Results for Iris} \label{tab:iris}
\begin{center}
\begin{tabular}{llllllll}
\textbf{method} &\textbf{alpha} &\textbf{accuracy} &\textbf{logloss} &\textbf{auroc} &\textbf{n} &\textbf{d} &\textbf{C} \\
\hline \\
ET &  & 0.9600±0.0097 & 0.0928±0.0123 & 0.9966±0.0017 & 150 & 4 & 3 \\
RF-bootstrap &  & 0.9500±0.0124 & 0.1160±0.0233 & 0.9950±0.0027 & 150 & 4 & 3 \\
RF-no-bootstrap &  & 0.9433±0.0071 & 0.2316±0.1249 & 0.9924±0.0038 & 150 & 4 & 3 \\
Subsample &  & 0.9500±0.0124 & 0.1172±0.0229 & 0.9949±0.0025 & 150 & 4 & 3 \\
Bayesian RF & 1.0 & 0.9533±0.0102 & 0.1157±0.0245 & 0.9951±0.0021 & 150 & 4 & 3 \\
DM & 0.0481 & 0.9633±0.0126 & 0.1509±0.0128 & 0.9968±0.0015 & 150 & 4 & 3 \\
DM & 0.1661 & 0.9600±0.0130 & 0.1216±0.0166 & 0.9956±0.0022 & 150 & 4 & 3 \\
DM & 1.5547 & 0.9500±0.0134 & 0.1206±0.0223 & 0.9949±0.0025 & 150 & 4 & 3 \\
DM & 1.5865 & 0.9533±0.0142 & 0.1167±0.0210 & 0.9943±0.0026 & 150 & 4 & 3 \\
DM & 3.5442 & 0.9633±0.0105 & 0.1096±0.0203 & 0.9958±0.0018 & 150 & 4 & 3 \\
DM & 4.0289 & 0.9467±0.0133 & 0.1203±0.0238 & 0.9950±0.0024 & 150 & 4 & 3 \\
DM & 5.6992 & 0.9533±0.0133 & 0.1146±0.0225 & 0.9953±0.0021 & 150 & 4 & 3 \\
DM & 7.0225 & 0.9567±0.0112 & 0.1114±0.0221 & 0.9954±0.0023 & 150 & 4 & 3 \\
DM & 17.7015 & 0.9567±0.0100 & 0.1155±0.0206 & 0.9943±0.0027 & 150 & 4 & 3 \\
DM & 63.2004 & 0.9600±0.0097 & 0.1182±0.0227 & 0.9952±0.0023 & 150 & 4 & 3 \\
DW & 0.0481 & 0.9467±0.0102 & 0.1082±0.0198 & 0.9952±0.0021 & 150 & 4 & 3 \\
DW & 0.1661 & 0.9500±0.0102 & 0.1147±0.0230 & 0.9943±0.0019 & 150 & 4 & 3 \\
DW & 1.5547 & 0.9500±0.0090 & 0.1221±0.0271 & 0.9955±0.0019 & 150 & 4 & 3 \\
DW & 1.5865 & 0.9500±0.0090 & 0.1174±0.0222 & 0.9947±0.0022 & 150 & 4 & 3 \\
DW & 3.5442 & 0.9433±0.0087 & 0.1229±0.0263 & 0.9945±0.0021 & 150 & 4 & 3 \\
DW & 4.0289 & 0.9533±0.0089 & 0.2264±0.1243 & 0.9925±0.0039 & 150 & 4 & 3 \\
DW & 5.6992 & 0.9433±0.0087 & 0.1252±0.0267 & 0.9938±0.0021 & 150 & 4 & 3 \\
DW & 7.0225 & 0.9500±0.0090 & 0.2299±0.1231 & 0.9925±0.0035 & 150 & 4 & 3 \\
DW & 17.7015 & 0.9400±0.0067 & 0.1274±0.0247 & 0.9937±0.0021 & 150 & 4 & 3 \\
DW & 63.2004 & 0.9467±0.0089 & 0.3388±0.1608 & 0.9898±0.0046 & 150 & 4 & 3 \\

\end{tabular}
\end{center}
\end{table}

\begin{table}[h]
\caption{Results for Letter} \label{tab:letter}
\begin{center}
\begin{tabular}{llllllll}
\textbf{method} &\textbf{alpha} &\textbf{accuracy} &\textbf{logloss} &\textbf{auroc} &\textbf{n} &\textbf{d} &\textbf{C} \\
\hline \\
ET &  & 0.9723±0.0008 & 0.2714±0.0014 & 0.9997±0.0000 & 20000 & 16 & 26 \\
RF-bootstrap &  & 0.9644±0.0009 & 0.2761±0.0021 & 0.9996±0.0000 & 20000 & 16 & 26 \\
RF-no-bootstrap &  & 0.9679±0.0009 & 0.2272±0.0016 & 0.9996±0.0000 & 20000 & 16 & 26 \\
Subsample &  & 0.9644±0.0009 & 0.2733±0.0021 & 0.9995±0.0000 & 20000 & 16 & 26 \\
Bayesian RF & 1.0 & 0.9673±0.0008 & 0.2358±0.0016 & 0.9996±0.0000 & 20000 & 16 & 26 \\
DM & 0.0013 & 0.7991±0.0022 & 1.4427±0.0036 & 0.9814±0.0003 & 20000 & 16 & 26 \\
DM & 0.0017 & 0.8125±0.0020 & 1.3492±0.0035 & 0.9842±0.0002 & 20000 & 16 & 26 \\
DM & 0.0081 & 0.8856±0.0017 & 0.8726±0.0026 & 0.9950±0.0001 & 20000 & 16 & 26 \\
DM & 0.0083 & 0.8866±0.0019 & 0.8613±0.0023 & 0.9953±0.0001 & 20000 & 16 & 26 \\
DM & 0.0262 & 0.9238±0.0015 & 0.6248±0.0024 & 0.9980±0.0000 & 20000 & 16 & 26 \\
DM & 0.0360 & 0.9309±0.0014 & 0.5724±0.0026 & 0.9983±0.0000 & 20000 & 16 & 26 \\
DM & 0.2139 & 0.9548±0.0009 & 0.3785±0.0020 & 0.9993±0.0000 & 20000 & 16 & 26 \\
DM & 0.3448 & 0.9579±0.0011 & 0.3477±0.0019 & 0.9994±0.0000 & 20000 & 16 & 26 \\
DM & 0.4136 & 0.9584±0.0007 & 0.3388±0.0019 & 0.9994±0.0000 & 20000 & 16 & 26 \\
DM & 0.4779 & 0.9602±0.0012 & 0.3309±0.0022 & 0.9995±0.0000 & 20000 & 16 & 26 \\
DW & 0.0013 & 0.9149±0.0015 & 0.8356±0.0025 & 0.9968±0.0000 & 20000 & 16 & 26 \\
DW & 0.0017 & 0.9245±0.0012 & 0.7650±0.0025 & 0.9975±0.0000 & 20000 & 16 & 26 \\
DW & 0.0081 & 0.9572±0.0010 & 0.4452±0.0019 & 0.9993±0.0000 & 20000 & 16 & 26 \\
DW & 0.0083 & 0.9577±0.0008 & 0.4411±0.0019 & 0.9993±0.0000 & 20000 & 16 & 26 \\
DW & 0.0262 & 0.9654±0.0008 & 0.3193±0.0016 & 0.9996±0.0000 & 20000 & 16 & 26 \\
DW & 0.0360 & 0.9665±0.0007 & 0.2990±0.0015 & 0.9996±0.0000 & 20000 & 16 & 26 \\
DW & 0.2139 & 0.9682±0.0009 & 0.2512±0.0019 & 0.9996±0.0000 & 20000 & 16 & 26 \\
DW & 0.3448 & 0.9678±0.0008 & 0.2471±0.0018 & 0.9996±0.0000 & 20000 & 16 & 26 \\
DW & 0.4136 & 0.9674±0.0007 & 0.2431±0.0017 & 0.9996±0.0000 & 20000 & 16 & 26 \\
DW & 0.4779 & 0.9674±0.0010 & 0.2436±0.0020 & 0.9996±0.0000 & 20000 & 16 & 26 \\

\end{tabular}
\end{center}
\end{table}

\begin{table}[h]
\caption{Results for Magic04} \label{tab:magic04}
\begin{center}
\begin{tabular}{llllllll}
\textbf{method} &\textbf{alpha} &\textbf{accuracy} &\textbf{logloss} &\textbf{auroc} &\textbf{n} &\textbf{d} &\textbf{C} \\
\hline \\
ET &  & 0.8807±0.0020 & 0.2991±0.0026 & 0.9398±0.0013 & 19020 & 10 & 2 \\
RF-bootstrap &  & 0.8835±0.0016 & 0.2914±0.0020 & 0.9402±0.0012 & 19020 & 10 & 2 \\
RF-no-bootstrap &  & 0.8831±0.0015 & 0.2932±0.0033 & 0.9393±0.0012 & 19020 & 10 & 2 \\
Subsample &  & 0.8835±0.0018 & 0.2935±0.0024 & 0.9404±0.0012 & 19020 & 10 & 2 \\
Bayesian RF & 1.0 & 0.8836±0.0014 & 0.2914±0.0023 & 0.9406±0.0011 & 19020 & 10 & 2 \\
DM & 0 & 0.6806±0.0039 & 0.5723±0.0022 & 0.8238±0.0064 & 19020 & 10 & 2 \\
DM & 0.0001 & 0.7509±0.0045 & 0.5129±0.0035 & 0.8680±0.0034 & 19020 & 10 & 2 \\
DM & 0.0001 & 0.7510±0.0025 & 0.5080±0.0016 & 0.8720±0.0027 & 19020 & 10 & 2 \\
DM & 0.0003 & 0.8049±0.0033 & 0.4574±0.0013 & 0.8963±0.0016 & 19020 & 10 & 2 \\
DM & 0.0008 & 0.8323±0.0017 & 0.4171±0.0014 & 0.9093±0.0018 & 19020 & 10 & 2 \\
DM & 0.0045 & 0.8572±0.0014 & 0.3644±0.0015 & 0.9226±0.0016 & 19020 & 10 & 2 \\
DM & 0.0611 & 0.8761±0.0019 & 0.3158±0.0020 & 0.9354±0.0013 & 19020 & 10 & 2 \\
DM & 0.2551 & 0.8816±0.0018 & 0.3013±0.0020 & 0.9390±0.0012 & 19020 & 10 & 2 \\
DM & 0.3127 & 0.8818±0.0019 & 0.2990±0.0018 & 0.9393±0.0012 & 19020 & 10 & 2 \\
DM & 0.6278 & 0.8838±0.0015 & 0.2970±0.0027 & 0.9401±0.0011 & 19020 & 10 & 2 \\
DW & 0 & 0.7787±0.0042 & 0.4951±0.0017 & 0.8786±0.0019 & 19020 & 10 & 2 \\
DW & 0.0001 & 0.8319±0.0026 & 0.4337±0.0012 & 0.9036±0.0009 & 19020 & 10 & 2 \\
DW & 0.0001 & 0.8329±0.0021 & 0.4296±0.0017 & 0.9063±0.0020 & 19020 & 10 & 2 \\
DW & 0.0003 & 0.8507±0.0018 & 0.3927±0.0012 & 0.9176±0.0016 & 19020 & 10 & 2 \\
DW & 0.0008 & 0.8608±0.0012 & 0.3634±0.0012 & 0.9251±0.0015 & 19020 & 10 & 2 \\
DW & 0.0045 & 0.8764±0.0016 & 0.3241±0.0017 & 0.9352±0.0013 & 19020 & 10 & 2 \\
DW & 0.0611 & 0.8848±0.0018 & 0.2982±0.0015 & 0.9410±0.0012 & 19020 & 10 & 2 \\
DW & 0.2551 & 0.8841±0.0015 & 0.2916±0.0019 & 0.9414±0.0012 & 19020 & 10 & 2 \\
DW & 0.3127 & 0.8849±0.0015 & 0.2928±0.0026 & 0.9411±0.0012 & 19020 & 10 & 2 \\
DW & 0.6278 & 0.8835±0.0016 & 0.2939±0.0028 & 0.9410±0.0012 & 19020 & 10 & 2 \\

\end{tabular}
\end{center}
\end{table}

\begin{table}[h]
\caption{Results for Optdigits} \label{tab:optdigits}
\begin{center}
\begin{tabular}{llllllll}
\textbf{method} &\textbf{alpha} &\textbf{accuracy} &\textbf{logloss} &\textbf{auroc} &\textbf{n} &\textbf{d} &\textbf{C} \\
\hline \\
ET &  & 0.9861±0.0010 & 0.2065±0.0011 & 0.9997±0.0000 & 5620 & 64 & 10 \\
RF-bootstrap &  & 0.9844±0.0010 & 0.2159±0.0018 & 0.9997±0.0000 & 5620 & 64 & 10 \\
RF-no-bootstrap &  & 0.9861±0.0010 & 0.1822±0.0015 & 0.9997±0.0000 & 5620 & 64 & 10 \\
Subsample &  & 0.9846±0.0011 & 0.2129±0.0015 & 0.9996±0.0000 & 5620 & 64 & 10 \\
Bayesian RF & 1.0 & 0.9843±0.0013 & 0.1902±0.0014 & 0.9997±0.0000 & 5620 & 64 & 10 \\
DM & 0 & 0.4194±0.0243 & 2.0544±0.0049 & 0.9030±0.0028 & 5620 & 64 & 10 \\
DM & 0.0001 & 0.7077±0.0115 & 1.8090±0.0063 & 0.9569±0.0018 & 5620 & 64 & 10 \\
DM & 0.0011 & 0.9162±0.0022 & 1.0917±0.0035 & 0.9922±0.0003 & 5620 & 64 & 10 \\
DM & 0.0013 & 0.9252±0.0015 & 1.0453±0.0033 & 0.9923±0.0004 & 5620 & 64 & 10 \\
DM & 0.0026 & 0.9399±0.0024 & 0.8644±0.0017 & 0.9953±0.0003 & 5620 & 64 & 10 \\
DM & 0.0226 & 0.9671±0.0021 & 0.4641±0.0025 & 0.9988±0.0001 & 5620 & 64 & 10 \\
DM & 0.1261 & 0.9789±0.0012 & 0.3089±0.0015 & 0.9994±0.0001 & 5620 & 64 & 10 \\
DM & 0.1583 & 0.9784±0.0014 & 0.2957±0.0017 & 0.9995±0.0000 & 5620 & 64 & 10 \\
DM & 0.2982 & 0.9817±0.0014 & 0.2678±0.0019 & 0.9995±0.0001 & 5620 & 64 & 10 \\
DM & 0.6041 & 0.9825±0.0014 & 0.2452±0.0020 & 0.9996±0.0000 & 5620 & 64 & 10 \\
DW & 0 & 0.8270±0.0102 & 1.5911±0.0062 & 0.9755±0.0011 & 5620 & 64 & 10 \\
DW & 0.0001 & 0.8976±0.0037 & 1.2571±0.0049 & 0.9882±0.0004 & 5620 & 64 & 10 \\
DW & 0.0011 & 0.9590±0.0021 & 0.6468±0.0023 & 0.9979±0.0001 & 5620 & 64 & 10 \\
DW & 0.0013 & 0.9608±0.0017 & 0.6156±0.0024 & 0.9980±0.0001 & 5620 & 64 & 10 \\
DW & 0.0026 & 0.9676±0.0015 & 0.4959±0.0018 & 0.9987±0.0001 & 5620 & 64 & 10 \\
DW & 0.0226 & 0.9822±0.0013 & 0.2630±0.0013 & 0.9996±0.0000 & 5620 & 64 & 10 \\
DW & 0.1261 & 0.9853±0.0009 & 0.2081±0.0015 & 0.9997±0.0000 & 5620 & 64 & 10 \\
DW & 0.1583 & 0.9852±0.0012 & 0.2068±0.0014 & 0.9997±0.0000 & 5620 & 64 & 10 \\
DW & 0.2982 & 0.9854±0.0011 & 0.1993±0.0014 & 0.9997±0.0000 & 5620 & 64 & 10 \\
DW & 0.6041 & 0.9859±0.0011 & 0.1937±0.0013 & 0.9997±0.0000 & 5620 & 64 & 10 \\

\end{tabular}
\end{center}
\end{table}

\begin{table}[h]
\caption{Results for Page Blocks} \label{tab:page-blocks}
\begin{center}
\begin{tabular}{llllllll}
\textbf{method} &\textbf{alpha} &\textbf{accuracy} &\textbf{logloss} &\textbf{auroc} &\textbf{n} &\textbf{d} &\textbf{C} \\
\hline \\
ET &  & 0.9709±0.0012 & 0.2461±0.0134 & 0.9777±0.0014 & 5473 & 10 & 5 \\
RF-bootstrap &  & 0.9728±0.0012 & 0.1610±0.0140 & 0.9881±0.0010 & 5473 & 10 & 5 \\
RF-no-bootstrap &  & 0.9692±0.0014 & 0.2775±0.0183 & 0.9757±0.0021 & 5473 & 10 & 5 \\
Subsample &  & 0.9722±0.0011 & 0.1648±0.0151 & 0.9888±0.0012 & 5473 & 10 & 5 \\
Bayesian RF & 1.0 & 0.9701±0.0015 & 0.2616±0.0152 & 0.9773±0.0016 & 5473 & 10 & 5 \\
DM & 0 & 0.8977±0.0000 & 0.4578±0.0228 & 0.8478±0.0149 & 5473 & 10 & 5 \\
DM & 0.0006 & 0.9065±0.0017 & 0.2171±0.0032 & 0.9815±0.0014 & 5473 & 10 & 5 \\
DM & 0.0033 & 0.9493±0.0017 & 0.1568±0.0050 & 0.9888±0.0011 & 5473 & 10 & 5 \\
DM & 0.0058 & 0.9544±0.0015 & 0.1433±0.0070 & 0.9903±0.0012 & 5473 & 10 & 5 \\
DM & 0.0063 & 0.9564±0.0015 & 0.1409±0.0063 & 0.9902±0.0011 & 5473 & 10 & 5 \\
DM & 0.0075 & 0.9587±0.0014 & 0.1305±0.0046 & 0.9918±0.0006 & 5473 & 10 & 5 \\
DM & 0.0239 & 0.9647±0.0016 & 0.1140±0.0050 & 0.9925±0.0009 & 5473 & 10 & 5 \\
DM & 0.0362 & 0.9674±0.0015 & 0.1117±0.0068 & 0.9920±0.0010 & 5473 & 10 & 5 \\
DM & 0.0551 & 0.9699±0.0015 & 0.1080±0.0055 & 0.9927±0.0008 & 5473 & 10 & 5 \\
DM & 0.0584 & 0.9693±0.0011 & 0.1129±0.0080 & 0.9918±0.0010 & 5473 & 10 & 5 \\
DW & 0 & 0.8977±0.0000 & 0.3002±0.0052 & 0.9566±0.0023 & 5473 & 10 & 5 \\
DW & 0.0006 & 0.9528±0.0013 & 0.1514±0.0066 & 0.9887±0.0013 & 5473 & 10 & 5 \\
DW & 0.0033 & 0.9677±0.0015 & 0.1075±0.0060 & 0.9919±0.0012 & 5473 & 10 & 5 \\
DW & 0.0058 & 0.9704±0.0012 & 0.1070±0.0051 & 0.9914±0.0010 & 5473 & 10 & 5 \\
DW & 0.0063 & 0.9708±0.0011 & 0.1052±0.0058 & 0.9906±0.0016 & 5473 & 10 & 5 \\
DW & 0.0075 & 0.9721±0.0012 & 0.1205±0.0064 & 0.9883±0.0018 & 5473 & 10 & 5 \\
DW & 0.0239 & 0.9733±0.0013 & 0.1543±0.0172 & 0.9860±0.0026 & 5473 & 10 & 5 \\
DW & 0.0362 & 0.9726±0.0013 & 0.1692±0.0132 & 0.9824±0.0019 & 5473 & 10 & 5 \\
DW & 0.0551 & 0.9716±0.0013 & 0.1782±0.0168 & 0.9811±0.0021 & 5473 & 10 & 5 \\
DW & 0.0584 & 0.9722±0.0014 & 0.1757±0.0171 & 0.9819±0.0026 & 5473 & 10 & 5 \\
\end{tabular}
\end{center}
\end{table}

\begin{table}[h]
\caption{Results for Pendigits} \label{tab:pendigits}
\begin{center}
\begin{tabular}{llllllll}
\textbf{method} &\textbf{alpha} &\textbf{accuracy} &\textbf{logloss} &\textbf{auroc} &\textbf{n} &\textbf{d} &\textbf{C} \\
\hline \\
ET &  & 0.9935±0.0003 & 0.0796±0.0039 & 0.9998±0.0001 & 10992 & 16 & 10 \\
RF-bootstrap &  & 0.9918±0.0005 & 0.0874±0.0044 & 0.9997±0.0001 & 10992 & 16 & 10 \\
RF-no-bootstrap &  & 0.9928±0.0005 & 0.0730±0.0046 & 0.9997±0.0001 & 10992 & 16 & 10 \\
Subsample &  & 0.9918±0.0004 & 0.0885±0.0049 & 0.9997±0.0001 & 10992 & 16 & 10 \\
Bayesian RF & 1.0 & 0.9933±0.0004 & 0.0775±0.0049 & 0.9997±0.0001 & 10992 & 16 & 10 \\

DM & 0.0001 & 0.7895±0.0097 & 1.3476±0.0078 & 0.9758±0.0009 & 10992 & 16 & 10 \\
DM & 0.0005 & 0.8851±0.0026 & 0.8485±0.0023 & 0.9916±0.0002 & 10992 & 16 & 10 \\
DM & 0.0005 & 0.8789±0.0036 & 0.8388±0.0025 & 0.9912±0.0003 & 10992 & 16 & 10 \\
DM & 0.0033 & 0.9495±0.0012 & 0.4226±0.0039 & 0.9981±0.0001 & 10992 & 16 & 10 \\
DM & 0.1355 & 0.9883±0.0004 & 0.1271±0.0041 & 0.9997±0.0001 & 10992 & 16 & 10 \\
DM & 0.1541 & 0.9885±0.0005 & 0.1249±0.0041 & 0.9997±0.0001 & 10992 & 16 & 10 \\
DM & 0.1792 & 0.9890±0.0006 & 0.1255±0.0050 & 0.9996±0.0001 & 10992 & 16 & 10 \\
DM & 0.2325 & 0.9894±0.0004 & 0.1167±0.0048 & 0.9997±0.0001 & 10992 & 16 & 10 \\
DM & 0.4928 & 0.9904±0.0006 & 0.1023±0.0041 & 0.9997±0.0001 & 10992 & 16 & 10 \\
DM & 0.5211 & 0.9904±0.0005 & 0.1035±0.0038 & 0.9997±0.0001 & 10992 & 16 & 10 \\
DW & 0.0001 & 0.8994±0.0023 & 0.8102±0.0049 & 0.9925±0.0003 & 10992 & 16 & 10 \\
DW & 0.0005 & 0.9521±0.0018 & 0.4468±0.0044 & 0.9982±0.0002 & 10992 & 16 & 10 \\
DW & 0.0005 & 0.9518±0.0015 & 0.4405±0.0035 & 0.9983±0.0001 & 10992 & 16 & 10 \\
DW & 0.0033 & 0.9835±0.0007 & 0.2065±0.0044 & 0.9995±0.0001 & 10992 & 16 & 10 \\
DW & 0.1355 & 0.9930±0.0005 & 0.0774±0.0040 & 0.9997±0.0001 & 10992 & 16 & 10 \\
DW & 0.1541 & 0.9934±0.0005 & 0.0784±0.0045 & 0.9997±0.0001 & 10992 & 16 & 10 \\
DW & 0.1792 & 0.9933±0.0004 & 0.0794±0.0054 & 0.9997±0.0001 & 10992 & 16 & 10 \\
DW & 0.2325 & 0.9934±0.0005 & 0.0786±0.0049 & 0.9997±0.0001 & 10992 & 16 & 10 \\
DW & 0.4928 & 0.9934±0.0003 & 0.0789±0.0057 & 0.9997±0.0001 & 10992 & 16 & 10 \\
DW & 0.5211 & 0.9929±0.0005 & 0.0757±0.0046 & 0.9997±0.0001 & 10992 & 16 & 10 \\
\end{tabular}
\end{center}
\end{table}

\begin{table}[h]
\caption{Results for Phoneme} \label{tab:phoneme}
\begin{center}
\begin{tabular}{llllllll}
\textbf{method} &\textbf{alpha} &\textbf{accuracy} &\textbf{logloss} &\textbf{auroc} &\textbf{n} &\textbf{d} &\textbf{C} \\
\hline \\
ET &  & 0.9182±0.0022 & 0.2190±0.0075 & 0.9702±0.0016 & 5404 & 5 & 2 \\
RF-bootstrap &  & 0.9127±0.0022 & 0.2588±0.0128 & 0.9634±0.0018 & 5404 & 5 & 2 \\
RF-no-bootstrap &  & 0.9131±0.0021 & 0.2452±0.0118 & 0.9668±0.0016 & 5404 & 5 & 2 \\
Subsample &  & 0.9119±0.0026 & 0.2531±0.0104 & 0.9631±0.0019 & 5404 & 5 & 2 \\
Bayesian RF & 1.0 & 0.9169±0.0022 & 0.2403±0.0116 & 0.9676±0.0016 & 5404 & 5 & 2 \\

DM & 0.0002 & 0.7340±0.0030 & 0.4718±0.0044 & 0.8364±0.0041 & 5404 & 5 & 2 \\
DM & 0.0017 & 0.8019±0.0038 & 0.4024±0.0027 & 0.8814±0.0026 & 5404 & 5 & 2 \\
DM & 0.0029 & 0.8188±0.0040 & 0.3877±0.0032 & 0.8931±0.0031 & 5404 & 5 & 2 \\
DM & 0.0154 & 0.8544±0.0036 & 0.3374±0.0035 & 0.9246±0.0029 & 5404 & 5 & 2 \\
DM & 0.0430 & 0.8741±0.0030 & 0.3112±0.0050 & 0.9385±0.0024 & 5404 & 5 & 2 \\
DM & 0.0448 & 0.8745±0.0024 & 0.3074±0.0030 & 0.9387±0.0021 & 5404 & 5 & 2 \\
DM & 0.0841 & 0.8836±0.0038 & 0.2924±0.0049 & 0.9461±0.0022 & 5404 & 5 & 2 \\
DM & 0.1244 & 0.8872±0.0030 & 0.2813±0.0043 & 0.9502±0.0021 & 5404 & 5 & 2 \\
DM & 0.1983 & 0.8935±0.0023 & 0.2735±0.0055 & 0.9535±0.0019 & 5404 & 5 & 2 \\
DM & 0.4224 & 0.9014±0.0022 & 0.2704±0.0108 & 0.9578±0.0020 & 5404 & 5 & 2 \\
DW & 0.0002 & 0.7985±0.0039 & 0.4096±0.0027 & 0.8777±0.0023 & 5404 & 5 & 2 \\
DW & 0.0017 & 0.8552±0.0025 & 0.3391±0.0029 & 0.9228±0.0023 & 5404 & 5 & 2 \\
DW & 0.0029 & 0.8669±0.0022 & 0.3180±0.0028 & 0.9344±0.0019 & 5404 & 5 & 2 \\
DW & 0.0154 & 0.9052±0.0026 & 0.2704±0.0087 & 0.9590±0.0022 & 5404 & 5 & 2 \\
DW & 0.0430 & 0.9149±0.0024 & 0.2425±0.0100 & 0.9665±0.0019 & 5404 & 5 & 2 \\
DW & 0.0448 & 0.9148±0.0025 & 0.2420±0.0085 & 0.9665±0.0017 & 5404 & 5 & 2 \\
DW & 0.0841 & 0.9191±0.0024 & 0.2528±0.0118 & 0.9678±0.0018 & 5404 & 5 & 2 \\
DW & 0.1244 & 0.9185±0.0017 & 0.2365±0.0095 & 0.9681±0.0017 & 5404 & 5 & 2 \\
DW & 0.1983 & 0.9179±0.0023 & 0.2351±0.0113 & 0.9681±0.0017 & 5404 & 5 & 2 \\
DW & 0.4224 & 0.9168±0.0021 & 0.2393±0.0145 & 0.9678±0.0018 & 5404 & 5 & 2 \\
\end{tabular}
\end{center}
\end{table}

\end{document}